\newtheorem{theorem}{Theorem}[section]
\newtheorem{thm}{Theorem}
\newtheorem{prop}{Proposition}
\newcommand{\rM}{\mathrm{M}}
\def \B {\mathcal{B}}
\def \x {\mathbf{x}}
\def \H {\mathcal{H}}
\def \R {\mathbb{R}}
\def \sgn {\mbox{sgn}}
\def \a {\mathbf{a}}
\def \u {\mathbf{u}}
\def \E {\textbf{E}}
\def \A {\textbf{A}}
\def \y {\mathbf{y}}
\def \S {\textbf{S}}
\def \X {\textbf{X}}
\def \Q {\textbf{Q}}
\def \Z {\textbf{Z}}
\def \Y {\textbf{Y}}
\def \P {\mathcal{P}}
\def \u {\mathbf{u}}
\def \v {\mathbf{v}}
\def \e {\mathbf{e}}
\def \U {\textbf{U}}
\def \V {\textbf{V}}
\def \S {\textbf{S}}
\def \X {\textbf{X}}
\def \Q {\textbf{Q}}
\def \Z {\textbf{Z}}
\def \U {\textbf{U}}
\def \N {\textbf{N}}
\def \F {\textbf{F}}
\def \V {\textbf{V}}
\def \P {\textbf{P}}
\def \Y {\textbf{Y}}
\def \B {\textbf{B}}
\def \H {\textbf{H}}
\def \E {\textbf{E}}
\def \A {\textbf{A}}
\def \v {\mathbf{v}}
\def \e {\mathbf{e}}
\def \rank {\mbox{rank}}
\def \robusttc {\textsc{RobustTC}}
\title{Diverse Few-Shot Text Classification with Multiple Metrics}
\author{
 Mo Yu\thanks{Equal contributions from the corresponding authors: \texttt{yum@us.ibm.com, xiaoxiao.guo@ibm.com, jinfengy@us.ibm.com}.} \quad \quad \   Xiaoxiao Guo$^*$\quad \quad Jinfeng Yi$^*$\quad \quad Shiyu Chang \\ 
 \textbf{Saloni Potdar}\quad \textbf{Yu Cheng} \quad \textbf{Gerald Tesauro}\quad \textbf{Haoyu Wang}\quad \textbf{Bowen Zhou}\\
 \vspace{-0.2cm}\\
AI Foundations -- Learning,  IBM Research\\
IBM T. J. Watson Research Center, Yorktown Heights, NY 10598\\
}
\date{}
\begin{document}
\maketitle
\begin{abstract}
We study few-shot learning in natural language domains. Compared to many existing works that apply either metric-based or optimization-based meta-learning to image domain with low inter-task variance, we consider a more realistic setting, where tasks are diverse.  However, it imposes tremendous difficulties to existing state-of-the-art metric-based algorithms since a single metric is insufficient to capture complex task variations in natural language domain.   To alleviate the problem, we propose an adaptive metric learning approach that automatically determines the best weighted combination from a set of metrics obtained from meta-training tasks for a newly seen few-shot task.   Extensive quantitative evaluations on real-world sentiment analysis and dialog intent classification datasets demonstrate that the proposed method performs favorably against state-of-the-art few shot learning algorithms in terms of predictive accuracy. 
We make our code and data available for further study.\footnote{\tiny{\url{https://github.com/Gorov/DiverseFewShot_Amazon}}}
\end{abstract}

\section{Introduction}

Few-shot learning (FSL)~\cite{miller2000learning,li2006one,lake2015human} aims to learn classifiers from few examples per class.  Recently, deep learning has been successfully exploited for FSL via learning meta-models from a large number of \textbf{meta-training tasks}.  These meta-models can be then used for rapid-adaptation for the \textbf{target/meta-testing tasks} that only have few training examples.  Examples of such meta-models include: (1) metric-/similarity-based models, which learn contextual, and task-specific similarity measures  \cite{koch2015siamese,vinyals2016matching,snell2017prototypical}; and (2) optimization-based models, which receive the input of gradients from a FSL task and predict either model parameters or parameter updates~\citep{ravi2017optimization,munkhdalai2017meta,finn2017model,NIPS2017_7278}. 

In the past, FSL has mainly considered image domains, where all tasks are often sampled from one huge collection of data, such as Omniglot~\citep{lake2011one} and ImageNet~\citep{vinyals2016matching}, making tasks come from a single domain thus related.  Due to such a simplified setting, almost all previous works employ a common meta-model (metric-/optimization-based) for all few-shot tasks. However, this setting is far from the realistic scenarios in many real-world applications of few-shot text classification.  For example, on an enterprise AI cloud service, many clients submit various tasks to train text classification models for business-specific purposes. The tasks could be classifying customers' comments or opinions on different products/services, monitoring public reactions to different policy changes, or determining users' intents in different types of personal assistant services. As most of the clients cannot collect enough data, their submitted tasks form a few-shot setting. Also, these tasks are significantly diverse, thus a common metric is insufficient to handle all these tasks.

We consider a more realistic FSL setting where tasks are diverse.  In such a scenario, the optimal meta-model may vary across tasks.  Our solution is based on the metric-learning approach \citep{snell2017prototypical} and the key idea is to maintain multiple metrics for FSL.  The meta-learner selects and combines multiple metrics for learning the target task using \textbf{task clustering} on the meta-training tasks. During the meta-training, we propose to first partition the meta-training tasks into clusters, making the tasks in each cluster likely to be related.  Then within each cluster, we train a deep embedding function as the metric.  This ensures the common metric is only shared across tasks within the same cluster.  Further, during meta-testing, each target FSL task is assigned to a task-specific metric, which is a linear combination of the metrics defined by different clusters.  In this way, the diverse few-shot tasks can derive different metrics from the previous learning experience.

The key of the proposed FSL framework is the task clustering algorithm. Previous works~\citep{kumar2012learning,kang2011learning,crammer2012learning,barzilai2015convex} mainly focused on convex objectives, and assumed the number of classes is the same across different tasks (\emph{e.g.} binary classification is often considered). To make task clustering (i) compatible with deep networks and (ii) able to handle tasks with a various number of labels, we propose a \textbf{matrix-completion based task clustering} algorithm.   The algorithm utilizes task similarity measured by cross-task transfer performance, denoted by matrix $\S$.  The $(i,j)$-entry of $\S$ is the estimated accuracy by adapting the learned representations on the $i$-th (source) task to the $j$-th (target) task. We rely on matrix completion to deal with missing and unreliable entries in $\S$ and finally apply spectral clustering to generate the task partitions.

To the best of our knowledge, our work is the first one addressing the diverse few-shot learning problem and reporting results on real-world few-shot text classification problems.  The experimental results show that the proposed algorithm provides significant gains on few-shot sentiment classification and dialog intent classification tasks.  It provides positive feedback on the idea of using multiple meta-models (metrics) to handle diverse FSL tasks, as well as the proposed task clustering algorithm on automatically detecting related tasks.

\begin{figure*}[ht]
\centering
\includegraphics[scale=0.25]{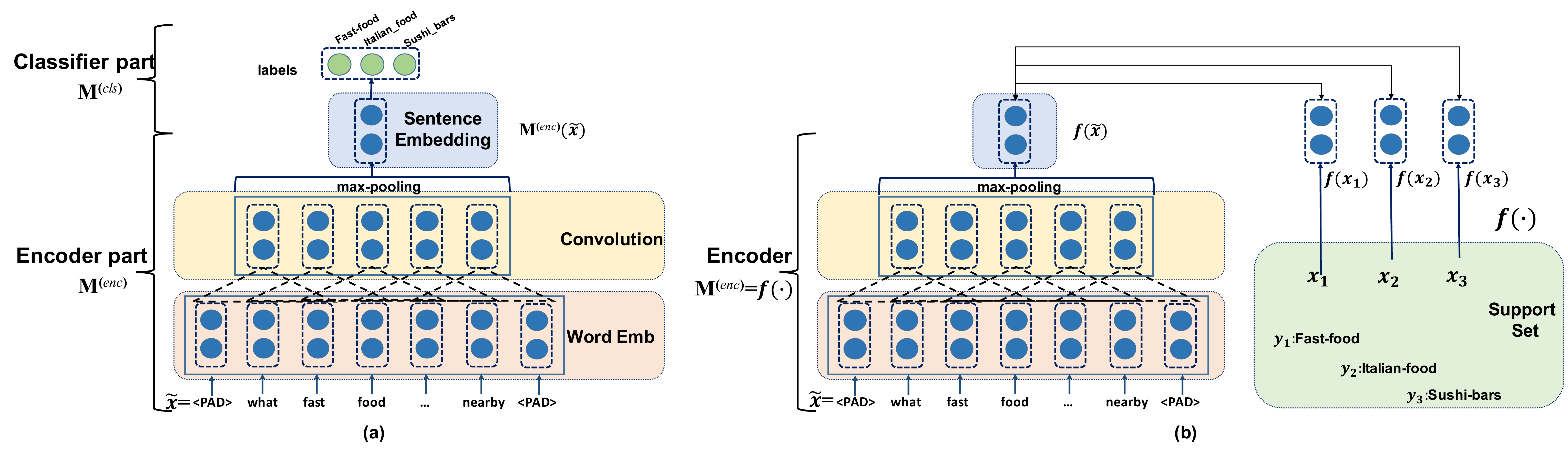}
\caption{{The Convolutional Neural Networks (CNN) used in this work: (a) A CNN classifier. The encoder component takes the sentence as input and outputs a fixed-length sentence embedding vector; the classifier component predicts class labels with the sentence embedding. (b) A Matching Network, which only contains an encoder like in (a), and makes prediction via a k-Nearest-Neighbor classifier with the similarity defined by the encoder.}
}
\label{fig:basic_models}
\end{figure*}

\section{Problem Definition}
\label{sec:notations}

\paragraph{Few-Shot Learning}
Since we focus on \textbf{diverse metric-based FSL}, the problem can be formulated in two stages:
(1) \textbf{meta-training}, where a set of metrics $\mathcal{M}=\left \{ \Lambda_1, \cdots, \Lambda_K \right \}$ is learned on the \textbf{meta-training tasks} $\mathcal{T}$.  Each $\Lambda_i$ maps two input $(x_1,x_2)$ to a scalar of similarity score. Here $\mathcal{T} = \left \{ \mathrm{T}_1, \mathrm{T}_2, \cdots, \mathrm{T}_N \right \}$ is a collection of $N$ tasks. 
Here $K$ is a pre-defined number (usually $K \ll N$).
Each task $\mathrm{T}_i$ consists of training, validation, and testing set denoted as $\left \{ D^{train}_{i}, D^{valid}_{i}, D^{test}_{i} \right\}$, respectively.  Note that the definition of $\mathcal{T}$ is a generalized version of $\mathcal{D}^{(meta-train)}$ in \citep{ravi2017optimization}, since each task $\mathrm{T}_i$ can be either few-shot (where $D^{valid}_{i}$ is empty) or regular\footnote{For example, the methods in \cite{triantafillou2017few} can be viewed as training meta-models from any sampled batches from one single meta-training dataset.}.
(2) \textbf{meta-testing}: the trained metrics in $\mathcal{M}$ is applied to \textbf{meta-testing tasks} denoted as $\mathcal{T'} = \left \{ \mathrm{T'}_1, \cdots, \mathrm{T'}_{N'} \right \}$, where each $\mathrm{T'}_i$ is a few-shot learning task consisting of both training and testing data as $\left \{ D'^{train}_{i}, D'^{test}_{i} \right\}$.  $D'^{train}_{i}$ is a small labeled set for generating the prediction model $\mathrm{M}'_i$ for each $\mathrm{T'_i}$. Specifically, $\mathrm{M}'_i$s are kNN-based predictors built upon the metrics in $\mathcal{M}$. We will detail the construction of $\mathrm{M}'_i$ in Section \ref{sec:methods}, Eq. (\ref{eqn:fsl}).  It is worth mentioning that the definition of $\mathcal{T}'$ is the same as $\mathcal{D}^{(meta-test)}$ in \citep{ravi2017optimization}.  The \textbf{performance of few-shot learning} is the macro-average of $\mathrm{M}'_i$'s accuracy on all the testing set $D'^{test}_i$s.

Our definitions can be easily generalized to other meta-learning approaches \cite{ravi2017optimization,finn2017model,mishra2017simple}.
The motivation of employing multiple metrics is that when the tasks are diverse, one metric model may not be sufficient.  Note that previous metric-based FSL methods can be viewed as a special case of our definition where $\mathcal{M}$ only contains a single $\Lambda$, as shown in the two base model examples below.

\paragraph{Base Model: Matching Networks}
In this paper we use the metric-based model Matching Network (MNet)~\cite{vinyals2016matching} as the base metric model.
The model (Figure \ref{fig:basic_models}b) consists of a neural network as the embedding function (\textbf{encoder}) and an augmented memory. The encoder, $f(\cdot)$, maps an input $x$ to a $d$-length vector. The learned metric $\Lambda$ is thus the similarity between the encoded vectors, $\Lambda(x_1,x_2)=f(x_1)^T f(x_2)$, i.e. the metric $\Lambda$ is modeled by the encoder $f$. The augmented memory stores a support set $S=\{(x_{i},y_{i})\}^{|S|}_{i=1}$, where $x_{i}$ is the supporting instance and $y_{i}$ is its corresponding label in a one-hot format. The MNet explicitly defines a classifier $\mathrm{M}$ conditioned on the supporting set $S$. For any new data $\hat{x}$, $\mathrm{M}$ predicts its label via a similarity function $\alpha(.,.)$ between the test instance $\hat{x}$ and the support set $S$:
\begin{align}
y = P(.|\hat{x}, S) = \sum_{i=1}^{|S|} \alpha(\hat{x}, x_{i};\theta) y_{i},
\label{eqn:base_mnet}
\end{align}
where we defined $\alpha(.,.)$ to be a softmax distribution given $\Lambda(\hat{x},x_i)$, where $x_{i}$ is a supporting instance, {\emph i.e.}, $\alpha(\hat{x}, x_{i};\theta) = \nicefrac{\exp(f(\hat{x})^T f(x_{i}))}{\sum_{j=1}^{|S|} \exp(f(\hat{x})^T f(x_{j}))}$, where $\theta$ are the parameters of the encoder $f$. Thus, $y$ is a valid distribution over the supporting set's labels $\{y_{i}\}_{i=1}^{|S|}$. To adapt the MNet to text classification, we choose encoder $f$ to be a convolutional neural network (CNN) following ~\cite{kim:2014:EMNLP2014,johnson2016supervised}.
Figure \ref{fig:basic_models} shows the MNet with the CNN architecture. 
Following \citep{collobert2011natural,kim:2014:EMNLP2014}, the model consists of a convolution layer and a max-pooling operation over the entire sentence. 

To train the MNets, we first sample the training dataset $D$ for task ${T}$ from all tasks $\mathcal{T}$, with notation simplified as $D\sim \mathcal{T}$. For each class in the sampled dataset $D$, we sample $k$ random instances in that class to construct a support set $S$, and sample a batch of training instances $B$ as training examples, i.e., $B,S\sim D$.  The training objective is to minimize the prediction error of the training samples given the supporting set (with regard to the encoder parameters $\theta$) as follows: 
\begin{equation}
\mathop{\mathbb{E}}_{D\sim \mathcal{T}} \Big[ \mathop{\mathbb{E}}_{B,S\sim D} \big[ \sum_{(x,y)\in B} \log(P(y|x,S;\theta))\big] \Big]. 
\label{eqn:mnet_obj}
\end{equation}

\paragraph{Base Model: Prototypical Networks}
Prototypical Network (ProtoNet)~\citep{snell2017prototypical} is a variation of Matching Network, which also depends on metric learning but builds the classifier $\mathrm{M}$ different from Eq. (\ref{eqn:base_mnet}):
\begin{equation}
y = P(.|\hat{x}, S) = \sum_{i=1}^{L} \alpha(\hat{x}, S_i;\theta) y_{i}.
\label{eqn:base_protonet}
\end{equation}
$L$ is the number of classes and $S_i\mathrm{=}\{x|(x,y) \in S\wedge y\mathrm{=}y_i\}$ is the support set of class $y_i$.
$\alpha(\hat{x}, S_{i};\theta) =\nicefrac{\exp\left( f(\hat{x})^T  \sum_{x\in S_i} f(x) \right)}{\sum_{j=1}^{L} \exp \left(f(\hat{x})^T \sum_{x' \in S_j} f(x') \right)}$.

\begin{figure*}[ht]
\centering
\includegraphics[scale=0.54]{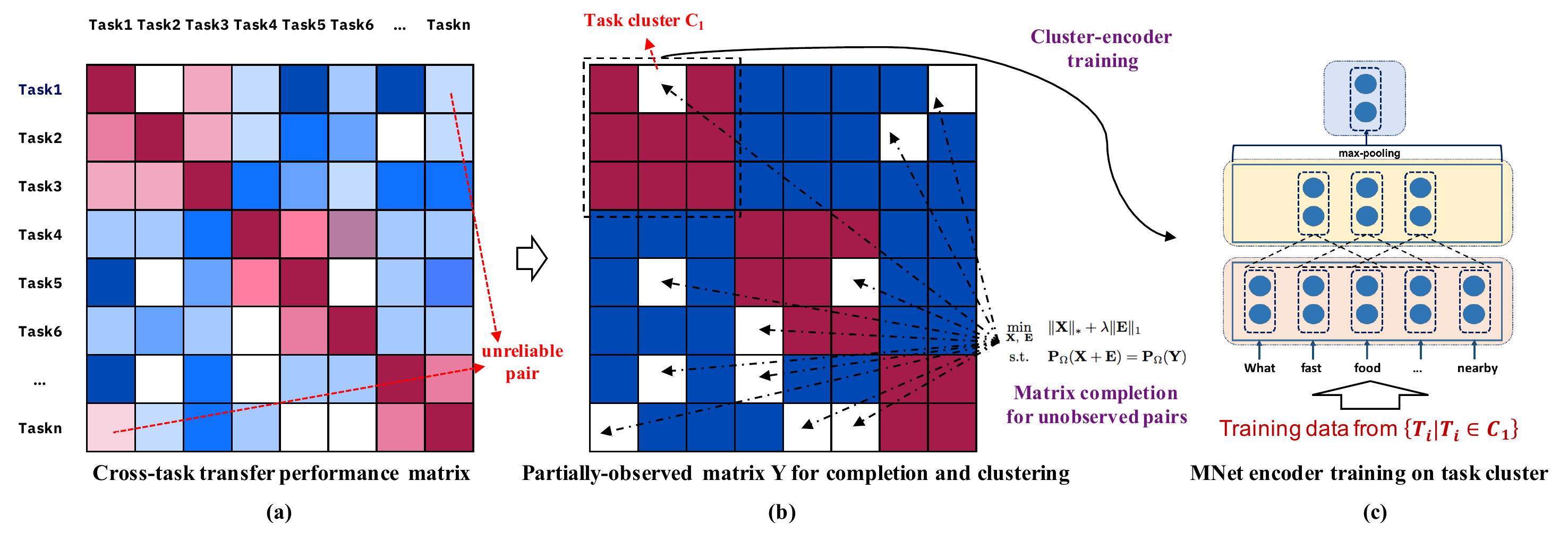}
\vspace{-0.1in}
\caption{{Overview of the idea of our multi-metric learning approach for few-shot learning. (a) an illustration of the sparse cross-tasks transfer-performance matrix with unobserved entries (white blocks) and unreliable values (top-right and bottom-left corners), where red colors indicate positive transfer and blue colors indicate negative transfer; 
(b) the constructed binary partially-observed matrix with low-rank constraint for matrix completion and clustering (see Section \ref{ssec:method_completion} for the details); (c) an encoder trained with the matching network objective Eq. (\ref{eqn:mnet_obj}) on a task cluster (tasks 1, 2 and 3 in the example).}
}
\label{fig:basic_idea}
\end{figure*}

\section{Methodology}
\label{sec:methods}

We propose a task-clustering framework to address the diverse few-shot learning problem stated in Section \ref{sec:notations}. We have the FSL algorithm summarized in Algorithm \ref{algo:taskcluster-fsl}. 
Figure \ref{fig:basic_idea} gives an overview of our idea.
The initial step of the algorithm is a novel task clustering algorithm based on matrix completion, which is described in Section \ref{ssec:method_completion}. The few-shot learning method based on task clustering is then introduced in Section \ref{ssec:method_fsl}.

\subsection{Robust Task Clustering by Matrix Completion}
\label{ssec:method_completion}

Our task clustering algorithm is shown in Algorithm \ref{algo:task-clustering}. The algorithm first evaluates the transfer performance by applying a single-task model $i$ to another task $j$ (Section \ref{sssec:encoder_transfer}), which will result in a (partially observed) cross-task transfer performance matrix $\textbf{S}$.
The matrix $\textbf{S}$ is then cleaned and completed, giving a symmetry task similarity matrix $\textbf{Y}$ for spectral clustering \cite{ng2002spectral}.

\subsubsection{Estimation of Cross-Task Transfer Performance}
\label{sssec:encoder_transfer}
Using single-task models, we can compute performance scores $s_{ij}$ by adapting each $\mathrm{M}_i$ to each task $T_j (j\neq i)$. This forms an 
$n \times n$ pair-wise classification performance matrix $\textbf{S}$,
called the \emph{transfer-performance matrix}.
Note that $\textbf{S}$ is asymmetric since usually $\S_{ij} \neq \S_{ji}$.

\begin{algorithm}[ht]
\small
{
\SetKwInOut{Input}{Input}
\SetKwInOut{Output}{Output}

 \Input{$N$ meta-training tasks $ \mathcal{T}$=$\left \{ \mathrm{T}_1, \mathrm{T}_2, \cdots, \mathrm{T}_n  \right \}$; number of clusters $K$; $N'$ target few-shot meta-testing tasks $\mathcal{T}'$}
 \Output{Meta-model $\mathcal{M} = \{ C_{1:K}\ (K\ \textrm{task clusters})$, $\mathcal{F} = \left \{ f_1,f_2, \cdots, f_K \right \} \ (K\ \textrm{task encoders})\}$ . One classifier $\mathrm{M'}_{i}$ for each target task $\mathrm{T}'$.}
 \DontPrintSemicolon
\BlankLine
 \textbf{Robust Task Clustering}: $C_{1:K}$ = \robusttc($\mathcal{T}$,$K$) (Algorithm \ref{algo:task-clustering}) \;
 \textbf{Cluster-Model Training}: Train one encoder (multi-task MNet) $f_i$ on each task cluster $C_i$ (Section \ref{sssec:method_meta_model})\;
 \textbf{Few-Shot Learning on Cluster-models}: Train a model $\mathrm{M}_{trg}$ on task $\mathrm{T}_{trg}$ with the method in Section \ref{sssec:method_fsl}.
 \caption{\label{algo:taskcluster-fsl}{\robusttc-FSL: Task Clustering for Few-Shot Learning}}}
\end{algorithm}

Ideally, the transfer performance could be estimated by training a MNet on task $i$ and directly evaluating it on task $j$. However, the limited training data usually lead to generally low transfer performance of single-task MNet.
As a result we adopt the following approach to estimate $\S$:

We train a CNN classifier (Figure \ref{fig:basic_models}(a)) on task $i$, then take only the encoder $\mathrm{M}^{enc}_i$ from $\rM_i$ and freeze it to train a classifier on task $j$.
This gives us a new task $j$ model, and we test this model on $D^{valid}_j$ to get the accuracy as the transfer-performance $\S_{ij}$.
The score shows how the representations learned on task $i$ can be adapted to task $j$, thus indicating the similarity between tasks.

\paragraph{Remark: Out-of-Vocabulary Problem}
In text classification tasks, transferring an encoder with fine-tuned word embeddings from one task to another is difficult as there can be a significant difference between the two vocabularies. Hence, while learning the single-task CNN classifiers, we always make the word embeddings fixed.

\setlength{\textfloatsep}{0pt}
\begin{algorithm}[t]
\small
{
\SetKwInOut{Input}{Input}
\SetKwInOut{Output}{Output}
 \Input{A set of $n$ tasks $ \mathcal{T} = \left \{ \mathrm{T}_1, \mathrm{T}_2, \cdots, \mathrm{T}_n  \right \}$, number of task clusters $K$}
 \Output{$K$ task clusters $C_{1:K}$}
 \DontPrintSemicolon
\BlankLine
 \textbf{Learning of Single-Task Models}: train single-task models $\mathrm{M}_i$ for each task $\mathrm{T}_i$\;
 \textbf{Evaluation of Transfer-Performance Matrix}: get performance matrix $\mathbf{\S}$ (Section \ref{sssec:encoder_transfer})\;
 \textbf{Score Filtering}: Filter the uncertain scores in $\S$ and construct the symmetric matrix $\Y$ using Eq. (\ref{eqn:A})\;
 \textbf{Matrix Completion}: Complete the similar matrix $\X$ from $\Y$ using Eq. (\ref{eqn:pro}) \;
 \textbf{Task Clustering}: $C_{1:K}$=SpectralClustering$(\X, K)$\;
 \caption{\label{algo:task-clustering}{\robusttc: Robust Task Clustering based on Matrix Completion}}}
\end{algorithm}

\subsubsection{Task Clustering Method}
\label{sssec:clustering_method}

Directly using the transfer performance for task clustering may suffer from both efficiency and accuracy issues. First, evaluation of all entries in the matrix $\S$ involves conducting the source-target transfer learning $O(n^2)$ times, where $n$ is the number of meta-training tasks. For a large number of diverse tasks where the $n$ can be larger than 1,000, evaluation of the full matrix is unacceptable (over 1M entries to evaluate). Second, the estimated cross-task performance (i.e. some $\S_{ij}$ or $\S_{ji}$ scores) is often unreliable due to small data size or label noise. When the number of the uncertain values is large, they can collectively mislead the clustering algorithm to output an incorrect task-partition. 
To address the aforementioned challenges, we propose a novel task clustering algorithm based on the theory of matrix completion~\citep{candes2010power}. 
Specifically, we deal with the huge number of entries by randomly sample task pairs to evaluate the $\S_{ij}$ and $\S_{ji}$ scores. Besides, we deal with the unreliable entries and asymmetry issue by keeping only task pairs $(i,j)$ with consistent $\S_{ij}$ and $\S_{ji}$ scores.
as will be introduced in Eq. (\ref{eqn:A}).
Below, we describe our method in detail. 

\paragraph{Score Filtering}
First, we use only reliable task pairs to generate a {\it partially-observed} similarity matrix $\Y$. Specifically, if $\S_{ij}$ and $\S_{ji}$ are high enough,  
then it is likely that tasks $\{i,j\}$ belong to a same cluster and share significant information. Conversely, if
$\S_{ij}$ and $\S_{ji}$ are low enough, 
then they tend to belong to different clusters. To this end, we need to design a mechanism to determine if a performance is high or low enough. Since different tasks may vary in difficulty, a fixed threshold is not suitable.
Hence, we define a dynamic threshold using the mean and standard deviation of the target task performance, i.e., $\mu_j = \text{mean}(\S_{:j})$ and $\sigma_j=\text{std}(\S_{:j})$, where $\S_{:j}$ is the $j$-th column of
$\S$. We then introduce two positive parameters $p_1$ and $p_2$, and define high and low performance as $\S_{ij}$ greater than $\mu_j + p_1 \sigma_j$ or lower than $\mu_j - p_2 \sigma_j$, respectively. When both $\S_{ij}$ and $\S_{ji}$ are high and low enough, we set their pairwise similarity as $1$ and $0$, respectively. Other task pairs are treated as uncertain task pairs and are marked as unobserved, and don't influence our clustering method.
This leads to a partially-observed symmetric matrix $\Y$, i.e., 
\begin{eqnarray}
\small
\Y_{ij}\mathrm{=}\Y_{ji}\mathrm{=}\left\{
\begin{array}{ll}
\multirow{2}{*}{1}     & \text{if}\ \ \S_{ij} > \mu_j + p_1 \sigma_j\ \ \\
&\text{and}\ \ \S_{ji} > \mu_i + p_1 \sigma_i\\
\multirow{2}{*}{0}     & \text{if}\ \ \S_{ij} < \mu_j - p_2 \sigma_j\ \ \\
&\text{and}\ \ \S_{ji} < \mu_i - p_2 \sigma_i\\
\mathrm{unobserved} & \mathrm{otherwise}
\end{array}
\right. \label{eqn:A}
\end{eqnarray}

\paragraph{Matrix Completion}
Given the partially observed matrix $\Y$, we then reconstruct the full similarity matrix $\X \in \mathbb{R}^{n\times n}$.
We first note that the similarity matrix $\X$ should be of low-rank (proof deferred to appendix). 
Additionally, since the observed entries of $\Y$ are generated based on high and low enough performance, it is safe to assume that most observed entries are correct and only a few may be incorrect. Therefore, we introduce a sparse matrix $\E$ to capture the observed incorrect entries in $\Y$. Combining the two observations, $\Y$ can be decomposed into the sum of two matrices $\X$ and $\E$, where $\X$ is a low rank matrix storing similarities between task pairs, and $\E$ is a sparse matrix that captures the errors in $\Y$.  The matrix completion problem can be cast as the following convex optimization problem:
\begin{eqnarray}\label{eqn:pro}
&\min\limits_{\X,\ \E} & \|\X\|_* + \lambda \|\E\|_1\\ \label{eqn:B}
& \mbox{s.t.}& \P_{\Omega}(\X+\E)  =  \P_{\Omega}(\Y), \nonumber
\end{eqnarray}
where $\|\circ\|_*$ denotes the matrix nuclear norm, the convex surrogate of rank function. $\Omega$ is the set of observed entries in $\Y$, and $\P_{\Omega}:\R^{n\times n} \mapsto \R^{n\times n}$ is a matrix projection operator defined as
\begin{eqnarray}
[\P_{\Omega}(\A)]_{ij} = \left\{
\begin{array}{ll}
\A_{ij} & \text{if}\ (i,j) \in \Omega \nonumber\\
0 & \mbox{otherwise}\nonumber
\end{array}
\right. \label{eqn:p}
\end{eqnarray}

Finally, we apply spectral clustering on the matrix $\X$ to get the task clusters.

\paragraph{Remark: Sample Efficiency}
In the Appendix A, we show a Theorem~\ref{thm:perfect-recovery} as well as its proof, implying that under mild conditions, the problem (\ref{eqn:pro}) can perfectly recover the underlying similarity matrix $\X^*$ if the number of observed correct entries is at least $O(n \log^2 n)$.
This theoretical guarantee implies that for a large number $n$ of training tasks, only a tiny fraction of all task pairs is needed to reliably infer similarities over all task pairs.

\subsection{Few-Shot Learning with Task Clusters}
\label{ssec:method_fsl}

\subsubsection{Training Cluster Encoders}
\label{sssec:method_meta_model}

For each cluster $C_k$, we train a multi-task MNet model (Figure~\ref{fig:basic_models}(b)) with all tasks in that cluster to encourage parameter sharing. 
The result, denoted as $f_k$ is called the \textbf{cluster-encoder} of cluster $C_k$. The $k$-th metric of the cluster is thus $\Lambda(x_1,x_2)=f_k(x_1)^{\intercal}f_k(x_2)$.

\subsubsection{Adapting Multiple Metrics for Few-Shot Learning}
\label{sssec:method_fsl}

To build a predictor $\mathrm{M}$ with access to only a limited number of training samples, we make the prediction probability by linearly combining prediction from learned cluster-encoders:
\begin{align}
p(y|x) = \sum_k \alpha_k P(y|x; f_k).
\label{eqn:fsl}
\end{align}
where $f_k$ is the learned (and frozen) encoder of the $k$-th cluster, $\{\alpha_{k}\}_{k=1}^{K}$ are adaptable parameters trained with few-shot training examples. 
And the predictor $P(y|x; f_k)$ from each cluster is 
\begin{eqnarray}
\small
P(y=y_l|x;f_k) = \frac{\exp\left \{ f_k (x_l)^{\intercal}f_k (x) \right \} }{\sum_{i} \exp \left \{ f_k (x_{i})^{\intercal}f_k (x) \right \} }
\end{eqnarray}
$x_{l}$ is the corresponding training sample of label $y_{l}$. 

\paragraph{Remark: Joint Method versus Pipeline Method}
End-to-end joint optimization on training data becomes a popular methodology for deep learning systems, but it is not directly applicable to diverse FSL. One main reason is that deep networks could easily fit any task partitions if we optimize on training loss only, making the learned metrics not generalize, as discussed in Section \ref{sec:related}.
As a result, this work adopts a pipeline training approach and employing validation sets for task clustering. Combining reinforcement learning with meta-learning could be a potential solution to enable an end-to-end training for future work.

\section{Tasks and Data Sets}

We test our methods by conducting experiments on two text classification data sets. We used NLTK toolkit\footnote{\url{http://www.nltk.org/}} for tokenization. The task are divided into meta-training tasks and meta-testing tasks (target tasks), where the meta-training tasks are used for clustering and cluster-encoder training. The meta-testing tasks are few-shot tasks, which are used for evaluating the method in Eq. (\ref{eqn:fsl}).

\subsection{Amazon Review Sentiment Classification}
First, following \citet{barzilai2015convex}, we construct multiple tasks with the multi-domain sentiment classification~\citep{blitzer2007biographies} data set. 
The dataset consists of Amazon product reviews for 23 types of products (see Appendix D for the details).
For each product domain, we construct three binary classification tasks with different thresholds on the ratings: the tasks consider a review as positive if it belongs to one of the following buckets $=5$ stars, $>=4$ stars or $>=2$ stars.\footnote{Data downloaded from \url{http://www.cs.jhu.edu/~mdredze/datasets/sentiment/}, in which the 3-star samples were unavailable due to their ambiguous nature \citep{blitzer2007biographies}.}
These buckets then form the basis of the task-setup, giving us 23 $\times$ 3$=$69 tasks in total. For each domain we distribute the reviews uniformly to the 3 tasks.
For evaluation, we select 12 (4$\times$3) tasks from 4 domains ({\it Books, DVD, Electronics, Kitchen}) as the meta-testing (target) tasks out of all 23 domains. For the target tasks, we create 5-shot learning problems. 

\subsection{Real-World Tasks: User Intent Classification for Dialog System}
The second dataset is from an online service which trains and serves intent classification models to various clients. The dataset comprises recorded conversations between human users and dialog systems in various domains, ranging from personal assistant to complex service-ordering or customer-service request scenarios. During classification, intent-labels\footnote{In conversational dialog systems, intent-labels are used to guide the dialog-flow.} are assigned to user utterances (sentences). We use a total of 175 tasks from different clients, and randomly sample 10 tasks from them as our target tasks. For each meta-training task, we randomly sample 64\% data into a training set, 16\% into a validation set, and use the rest as the test set.
The number of labels for these tasks varies a lot (from 2 to 100, see Appendix D for details), making regular $k$-shot settings not essentially limited-resource problems (e.g., 5-shot on 100 classes will give a good amount of 500 training instances). 
Hence, to adapt this to a FSL scenario, for target tasks we keep one example for each label (one-shot), plus 20 randomly picked labeled examples to create the training data. We believe this is a fairly realistic estimate of labeled examples one client could provide easily. 

\paragraph{Remark: Evaluation of the Robustness of Algorithm \ref{algo:task-clustering}} Our matrix-completion method could handle a large number of tasks via task-pair sampling. However, the sizes of tasks in the above two few-shot learning datasets are not too huge, so evaluation of the whole task-similarity matrix is still tractable. In our experiments, the incomplete matrices mainly come from the score-filtering step (see Eq. \ref{eqn:A}). Thus there is limited randomness involved in the generation of task clusters.

To strengthen the conclusion, we evaluate our algorithm on an additional dataset with a much larger number of tasks. The results are reported in the multi-task learning setting instead of the few-shot learning setting focused in this paper. Therefore we put the results to a non-archive version of this paper\footnote{\url{https://arxiv.org/pdf/1708.07918.pdf}}  for further reference.

\section{Experiments}

\subsection{Experiment Setup}
\label{ssec:exp_setup}

\paragraph{Baselines} 
We compare our method to the following baselines:
(1) \textbf{Single-task CNN}: training a CNN model for each task individually; (2) \textbf{Single-task FastText}: training one FastText model~\citep{joulin2016bag} with fixed embeddings for each individual task; 
(3) \textbf{Fine-tuned the holistic MTL-CNN}: a standard transfer-learning approach, which trains one MTL-CNN model on all the training tasks offline, then fine-tunes the classifier layer (i.e. $\mathrm{M}^{(cls)}$ Figure \ref{fig:basic_models}(a)) on each target task; (4) \textbf{Matching Network}: a metric-learning based few-shot learning model trained on all training tasks;
(5) \textbf{Prototypical Network}: a variation of matching network with different prediction function as Eq. \ref{eqn:base_protonet};
(6) \textbf{Convex combining all single-task models}: 
training one CNN classifier on each meta-training task individually and taking the encoder,
then for each target task training a linear combination of all the above single-task encoders with Eq. (\ref{eqn:fsl}).
This baseline can be viewed as a variation of our method without task clustering.
We initialize all models with pre-trained 100-dim Glove embeddings (trained on 6B corpus)~\citep{pennington2014glove}.

\begin{table*}[ht]
\centering
\begin{tabular}{l|cc}
\hline 
\multirow{2}{*}{\bf Model}  & \multicolumn{2}{c}{\bf Avg Acc} \\ 
& \bf Sentiment & \bf Intent \\
\hline
(1) Single-task CNN w/pre-trained emb   & 65.92 & 34.46 \\ 
(2) Single-task FastText w/pre-trained emb  & 63.05 & 23.87 \\ 
(3) Fine-tuned holistic MTL-CNN  & 76.56 & 30.36 \\
(4) Matching Network~\citep{vinyals2016matching} & 65.73 & 30.42  \\
(5) Prototypical Network~\citep{snell2017prototypical} & 68.15 & 31.51 \\
\hline
(6) Convex combination of all single-task models & 78.85 & 34.43 \\
\hline
\hline
{\bf \robusttc-FSL} & \bf 83.12 & \bf 37.59\\
\hline
{\bf Adaptive \robusttc-FSL} & - & {\bf 42.97}\\
\hline
\end{tabular}
\caption{Accuracy of FSL on sentiment classification (Sentiment) and dialog intent classification (Intent) tasks. The target tasks of sentiment classification are 5-shot ones; and each intent target task contains one training example per class and 20 random labeled examples.}
\label{tab:main_exp}
\vspace{-0.1 in}
\end{table*}

\paragraph{Hyper-Parameter Tuning}
In all experiments, we set both $p_1$ and $p_2$ parameters in (\ref{eqn:A}) to $0.5$.
This strikes a balance between obtaining enough observed entries in $\Y$, and ensuring that most of the retained similarities are consistent with the cluster membership.
The window/hidden-layer sizes of CNN and the initialization of embeddings (random or pre-trained) are tuned during the cluster-encoder training phase, with the validation sets of meta-training tasks. 
We have the CNN with window size of 5 and 200 hidden units. The single-metric FSL baselines have 400 hidden units in the CNN encoders. On sentiment classification, all cluster-encoders use random initialized word embeddings for sentiment classification, and use Glove embeddings as initialization for intent classification, which is likely because the training sets of the intent tasks are usually small. 

Since all the sentiment classification tasks are binary classification based on our dataset construction. A CNN classifier with binary output layer can be also trained as the cluster-encoder for each task cluster. Therefore we compared CNN classifier, matching network, and prototypical network on Amazon review, and found that CNN classifier performs similarly well as prototypical network. Since some of the Amazon review data is quite large which involves further difficulty on the computation of supporting sets, we finally use binary CNN classifiers as cluster-encoders in all the sentiment classification experiments.

Selection of the learning rate and number of training epochs for FSL settings, i.e., fitting $\alpha$s in Eq. (\ref{eqn:fsl}), is more difficult since there is no validation data in few-shot problems.
Thus we pre-select a subset of meta-training tasks as meta-validation tasks and tune the two hyper-parameters on the meta-validation tasks.

\subsection{Experimental Results}
\label{ssec:exp_main}
Table \ref{tab:main_exp} shows the main results on (i) the 12 few-shot product sentiment classification tasks by leveraging the learned knowledge from the 57 previously observed tasks from other product domains; and (ii) the 10 few-shot dialog intent classification tasks by leveraging the 165 previously observed tasks from other clients' data.

Due to the limited training resources, all the supervised-learning baselines perform poorly. 
The two state-of-the-art metric-based FSL approaches, matching network (4) and prototypical network (5), do not perform better compared to the other baselines, since the single metric is not sufficient for all the diverse tasks.
On intent classification where tasks are further diverse, all the single-metric or single-model methods (3-5) perform worse compared to the single-task CNN baseline (1).
The convex combination of all the single training task models is the best performing baseline overall. 
However, on intent classification it only performs on par with the single-task CNN (1), which does not use any meta-learning or transfer learning techniques, mainly for two reasons: (i) with the growth of the number of meta-training tasks, the model parameters grow linearly, making the number of parameters (165 in this case) in Eq.(\ref{eqn:fsl}) too large for the few-shot tasks to fit; (ii) the meta-training tasks in intent classification usually contain less training data, making the single-task encoders not generalize well.

In contrast, our \robusttc-FSL gives consistently better results compared to all the baselines.
It outperforms the baselines in previous work (1-5) by a large margin of more than 6\% on the sentiment classification tasks, and more than 3\% on the intent classification tasks.
It is also significantly better than our proposed baseline (6), showing the advantages of the usage of task clustering.

\paragraph{Adaptive \robusttc-FSL}
Although the \robusttc-FSL improves over baselines on intent classification, the margin is smaller compared to that on sentiment classification, because the intent classification tasks are more diverse in nature. This is also demonstrated by the training accuracy on the target tasks, where several tasks fail to find any cluster that could provide a metric that suits their training examples. 
To deal with this problem, we propose an improved algorithm to automatically discover whether a target task belongs to none of the task-clusters. If the task doesn't belong to any of the clusters, it cannot benefit from any previous knowledge thus falls back to single-task CNN. The target task is treated as ``out-of-clusters'' when none of the clusters could achieve higher than 20\% accuracy (selected on meta-validation tasks) on its training data. We call this method \textbf{Adaptive \robusttc-FSL}, which gives more than 5\% performance boost over the best \robusttc-FSL result on intent classification. Note that the adaptive approach makes no difference on the sentiment tasks, because they are more closely related so re-using cluster-encoders always achieves better results compared to single-task CNNs.

\subsection{Analysis}

\paragraph{Effect of the number of clusters}
Figure \ref{fig:exp_cluster} shows the effect of cluster numbers on the two tasks. \robusttc\ achieves best performance with 5 clusters on sentiment analysis (SA) and 20 clusters on intent classification (Intent). All clustering results significantly outperform the single-metric baselines (\#cluster=1 in the figure).

\begin{figure}[t]
\centering
\includegraphics[scale=0.36]{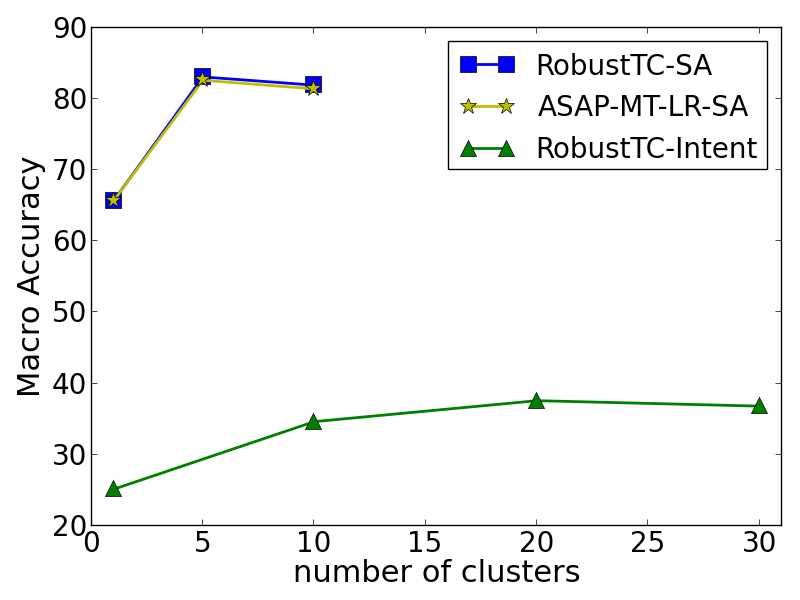}
\caption{Effect of clusters. \robusttc-SA and \robusttc-Intent: the performance of our \robusttc\ clusters on the sentiment and intent classification tasks. ASAP-MT-LR-SA: the state-of-the-art ASAP-MT-LR clusters on the sentiment-analysis tasks (the method is not applicable to the intent-classification tasks).
}
\vspace{1.5 em}
\label{fig:exp_cluster}
\end{figure}

\begin{table*}[ht]
\centering
\tiny
\begin{tabular}{c|c|c|c|c|c|c|c|c|c|c}
\hline
&{\bf Clus0}& {\bf Clus1}& {\bf Clus2}& {\bf Clus3}& {\bf Clus4}& {\bf Clus5}& {\bf Clus6}& {\bf Clus7}& {\bf Clus8}& {\bf Clus9}\\ \hline
& automotive.t2& apparel.t2& baby.t5& automotive.t5& apparel.t5& beauty.t4& camera.t4& gourmet.t5& cell.t4& apparel.t4\\
& camera.t2& automotive.t4& magazines.t5& baby.t4& camera.t5& beauty.t5& software.t2& magazines.t4& software.t5& toys.t2\\
& health.t2& baby.t2& sports.t5& health.t4& grocery.t5& cell.t5& software.t4& music.t4& toys.t4& \\
& magazines.t2& cell.t2& toys.t5& health.t5& jewelry.t5& gourmet.t2& & music.t5& & \\
& office.t2& computer.t2& video.t5& & & gourmet.t4& & video.t4& & \\
& outdoor.t2& computer.t4& & & & grocery.t2& & & & \\
& sports.t2& computer.t5& & & & grocery.t4& & & & \\
& sports.t4& jewelry.t4& & & & office.t4& & & & \\
& & music.t2& & & & outdoor.t4& & & & \\
& & video.t2& & & & & & & & \\
\hline
\bf dvd-t4& 0.4844 & 0.4416 & 0.4625 & \textcolor{blue}{\bf 0.7843} & \textcolor{blue}{\bf 0.7970} & 0.7196 & \textcolor{blue}{\bf 0.8952} & 0.3763 & 0.7155 & 0.6315\\
\bf dvd-t5& 0.0411 & -0.2493 & \textcolor{blue}{\bf 0.5037} & \textcolor{blue}{\bf 0.3567} & 0.1686 & -0.0355 & \textcolor{blue}{\bf 0.4150} & -0.2603& -0.0867 & 0.0547\\
\bf kitchen-t4&  0.6823 & 0.7268 & 0.7929 & \textcolor{blue}{\bf 1.2660} & \textcolor{blue}{\bf 1.1119} & 0.7255 & \textcolor{blue}{\bf 1.2196} & 0.7065 & 0.6625 & 1.0945\\
\hline
\end{tabular}
\caption{Visualization of clusters on the Amazon review domain. The top shows the training tasks assigned to the 10 clusters. Here the number N$\in\{2,4,5\}$ refers to the threshold of stars for positive reviews. 
At the bottom we show three tasks with largest improvement from \robusttc-FSL. The top-3 most relevant task clusters (i.e. with highest weights $\alpha$s in Eq.\ref{eqn:fsl} ) are highlighted with \textcolor{blue}{\bf blue bold} font.}\label{tab:cluseter_visual}
\vspace{-0.1in}
\end{table*}

\paragraph{Effect of the clustering algorithms}
Compared to previous task clustering algorithms, our \robusttc\ is the only one that can cluster tasks with varying numbers of class labels (e.g. in intent classification tasks).
Moreover, we show that even in the setting of all binary classifications tasks (e.g. the sentiment-analysis tasks) that previous task clustering research work on, our \robusttc\ is still slightly better for the diverse FSL problems.
Figure \ref{fig:exp_cluster} compares with a state-of-the-art logistic regression based task clustering method (\textbf{ASAP-MT-LR})~\citep{barzilai2015convex}. 
Our \robusttc\ clusters give slightly better FSL performance (e.g. 83.12 vs. 82.65 when \#cluster=5).

\paragraph{Visualization of Task Clusters}
The top rows of Table \ref{tab:cluseter_visual} shows the ten clusters used to generate the sentiment classification results in Figure \ref{fig:exp_cluster}. From the results, we can see that tasks with same thresholds are usually grouped together; and tasks in similar domains also tend to appear in the same clusters, even the thresholds are slightly different (e.g. t2 vs t4 and t4 vs t5).

The bottom of the table shows the weights $\alpha$s in Eq.(\ref{eqn:fsl}) for the target tasks with the largest improvement. It confirms that our \robusttc-FSL algorithm accurately adapts multiple metrics for the target tasks.

\section{Related Work}
\label{sec:related}

{\noindent \bf Few Shot Learning} \quad
FSL~\citep{miller2000learning,li2006one,lake2015human} aims to learn classifiers for new classes with only a few training examples per class. 
Recent deep learning based FSL approaches mainly fall into two categories:
(1) \emph{metric-based approaches} \cite{koch2015siamese,vinyals2016matching,snell2017prototypical}, which aims to learn 
generalizable metrics and corresponding matching functions from multiple training tasks.
These approaches essentially learn one metric for all tasks, which is sub-optimal when the tasks are diverse.
(2) \emph{optimization-based approaches}~\cite{ravi2017optimization,munkhdalai2017meta,finn2017model}, which aims to learn to optimize model parameters (by either predicting the parameter updates or directly predicting the model parameters) given the gradients computed from few-shot examples.



Previous FSL research usually adopts the $k$-shot, $N$-way setting, where all the few-shot tasks have the same number of $N$ class labels, and each label has $k$ training instances.
Moreover, these few-shot tasks are usually constructed by sampling from one huge dataset, thus all the tasks are guaranteed to be related to each other.
However, in real-world applications, the few-shot learning tasks could be diverse: there are different tasks with varying number of class labels and they are not guaranteed to be related to each other.
As a result, a single meta-model or metric-model is usually not sufficient to handle all the few-shot tasks.

{\noindent \bf Task Clustering} \quad
Previous task clustering methods measure the
task relationships in terms of similarities among single-task model parameters~\citep{kumar2012learning,kang2011learning};
or jointly assign task clusters and train model parameters for each cluster to minimize the overall training loss \citep{crammer2012learning,barzilai2015convex,murugesan2017co}.
These methods usually work on convex models but do not fit the deep networks, mainly because of (i) the parameters of deep networks are very high-dimensional and their similarities are not necessarily related to the functional similarities; and (ii) deep networks have flexible representation power so they may overfit to arbitrary cluster assignment if we consider training loss alone.
Moreover, these methods require identical class label sets across different tasks, which does not hold in most of the realistic settings. 

\section{Conclusion}
We propose a few-shot learning approach for diverse tasks based on task clustering.
The proposed method can use multiple metrics, and performs significantly better compared to previous single-metric methods when the few-shot tasks come from diverse domains.
Future work includes applying the task-clustering idea to other FSL algorithms \cite{ravi2017optimization,finn2017model,cheng2017metametric}, and exploring more advanced composition methods of cluster-encoders beyond linear combination~\cite{chang2013multimedia,andreas2016neural}. 

\bibliography{naaclhlt2018}

\begin{thebibliography}{30}
\expandafter\ifx\csname natexlab\endcsname\relax\def\natexlab#1{#1}\fi

\bibitem[{Andreas et~al.(2016)Andreas, Rohrbach, Darrell, and
  Klein}]{andreas2016neural}
Jacob Andreas, Marcus Rohrbach, Trevor Darrell, and Dan Klein. 2016.
\newblock Neural module networks.
\newblock In \emph{Proceedings of the IEEE Conference on Computer Vision and
  Pattern Recognition}, pages 39--48.

\bibitem[{Barzilai and Crammer(2015)}]{barzilai2015convex}
Aviad Barzilai and Koby Crammer. 2015.
\newblock Convex multi-task learning by clustering.
\newblock In \emph{AISTATS}.

\bibitem[{Blitzer et~al.(2007)Blitzer, Dredze, and
  Pereira}]{blitzer2007biographies}
John Blitzer, Mark Dredze, and Fernando Pereira. 2007.
\newblock Biographies, bollywood, boom-boxes and blenders: Domain adaptation
  for sentiment classification.
\newblock In \emph{ACL}, volume~7, pages 440--447.

\bibitem[{Cand{\`e}s and Tao(2010)}]{candes2010power}
Emmanuel~J Cand{\`e}s and Terence Tao. 2010.
\newblock The power of convex relaxation: Near-optimal matrix completion.
\newblock \emph{IEEE Transactions on Information Theory}, 56(5):2053--2080.

\bibitem[{Chandrasekaran et~al.(2011)Chandrasekaran, Sanghavi, Parrilo, and
  Willsky}]{chandrasekaran2011rank}
Venkat Chandrasekaran, Sujay Sanghavi, Pablo~A Parrilo, and Alan~S Willsky.
  2011.
\newblock Rank-sparsity incoherence for matrix decomposition.
\newblock \emph{SIAM Journal on Optimization}, 21(2):572--596.

\bibitem[{Chang et~al.(2013)Chang, Qi, Tang, Tian, Rui, and
  Huang}]{chang2013multimedia}
Shiyu Chang, Guo-Jun Qi, Jinhui Tang, Qi~Tian, Yong Rui, and Thomas~S Huang.
  2013.
\newblock Multimedia lego: Learning structured model by probabilistic logic
  ontology tree.
\newblock In \emph{Data Mining (ICDM), 2013 IEEE 13th International Conference
  on}, pages 979--984. IEEE.

\bibitem[{Cheng et~al.(2017)Cheng, Yu, Guo, and Zhou}]{cheng2017metametric}
Yu~Cheng, Mo~Yu, Xiaoxiao Guo, and Bowen Zhou. 2017.
\newblock Few-shot learning with meta metric learners.
\newblock In \emph{NIPS 2017 Workshop on Meta-Learning}.

\bibitem[{Collobert et~al.(2011)Collobert, Weston, Bottou, Karlen, Kavukcuoglu,
  and Kuksa}]{collobert2011natural}
Ronan Collobert, Jason Weston, L{\'e}on Bottou, Michael Karlen, Koray
  Kavukcuoglu, and Pavel Kuksa. 2011.
\newblock Natural language processing (almost) from scratch.
\newblock \emph{Journal of Machine Learning Research}, 12(Aug):2493--2537.

\bibitem[{Crammer and Mansour(2012)}]{crammer2012learning}
Koby Crammer and Yishay Mansour. 2012.
\newblock Learning multiple tasks using shared hypotheses.
\newblock In \emph{Advances in Neural Information Processing Systems}, pages
  1475--1483.

\bibitem[{Finn et~al.(2017)Finn, Abbeel, and Levine}]{finn2017model}
Chelsea Finn, Pieter Abbeel, and Sergey Levine. 2017.
\newblock Model-agnostic meta-learning for fast adaptation of deep networks.
\newblock \emph{arXiv preprint arXiv:1703.03400}.

\bibitem[{Johnson and Zhang(2016)}]{johnson2016supervised}
Rie Johnson and Tong Zhang. 2016.
\newblock Supervised and semi-supervised text categorization using one-hot lstm
  for region embeddings.
\newblock \emph{stat}, 1050:7.

\bibitem[{Joulin et~al.(2016)Joulin, Grave, Bojanowski, and
  Mikolov}]{joulin2016bag}
Armand Joulin, Edouard Grave, Piotr Bojanowski, and Tomas Mikolov. 2016.
\newblock Bag of tricks for efficient text classification.
\newblock \emph{arXiv preprint arXiv:1607.01759}.

\bibitem[{Kang et~al.(2011)Kang, Grauman, and Sha}]{kang2011learning}
Zhuoliang Kang, Kristen Grauman, and Fei Sha. 2011.
\newblock Learning with whom to share in multi-task feature learning.
\newblock In \emph{Proceedings of the 28th International Conference on Machine
  Learning (ICML-11)}, pages 521--528.

\bibitem[{Kim(2014)}]{kim:2014:EMNLP2014}
Yoon Kim. 2014.
\newblock Convolutional neural networks for sentence classification.
\newblock In \emph{EMNLP}, pages 1746--1751, Doha, Qatar. Association for
  Computational Linguistics.

\bibitem[{Koch(2015)}]{koch2015siamese}
Gregory Koch. 2015.
\newblock \emph{Siamese neural networks for one-shot image recognition}.
\newblock Ph.D. thesis, University of Toronto.

\bibitem[{Kumar and Daume~III(2012)}]{kumar2012learning}
Abhishek Kumar and Hal Daume~III. 2012.
\newblock Learning task grouping and overlap in multi-task learning.
\newblock In \emph{Proceedings of the 29th International Conference on Machine
  Learning (ICML-12)}.

\bibitem[{Lake et~al.(2011)Lake, Salakhutdinov, Gross, and
  Tenenbaum}]{lake2011one}
Brenden~M Lake, Ruslan Salakhutdinov, Jason Gross, and Joshua~B Tenenbaum.
  2011.
\newblock One shot learning of simple visual concepts.
\newblock In \emph{CogSci}, volume 172, page~2.

\bibitem[{Lake et~al.(2015)Lake, Salakhutdinov, and Tenenbaum}]{lake2015human}
Brenden~M Lake, Ruslan Salakhutdinov, and Joshua~B Tenenbaum. 2015.
\newblock Human-level concept learning through probabilistic program induction.
\newblock \emph{Science}, 350(6266):1332--1338.

\bibitem[{Li et~al.(2006)Li, Fergus, and Perona}]{li2006one}
Fei-Fei Li, Rob Fergus, and Pietro Perona. 2006.
\newblock One-shot learning of object categories.
\newblock \emph{IEEE Transactions on Pattern Analysis and Machine
  Intelligence}, 28(4):594--611.

\bibitem[{Miller et~al.(2000)Miller, Matsakis, and Viola}]{miller2000learning}
Erik~G Miller, Nicholas~E Matsakis, and Paul~A Viola. 2000.
\newblock Learning from one example through shared densities on transforms.
\newblock In \emph{Computer Vision and Pattern Recognition, 2000. Proceedings.
  IEEE Conference on}, volume~1, pages 464--471. IEEE.

\bibitem[{Mishra et~al.(2017)Mishra, Rohaninejad, Chen, and
  Abbeel}]{mishra2017simple}
Nikhil Mishra, Mostafa Rohaninejad, Xi~Chen, and Pieter Abbeel. 2017.
\newblock A simple neural attentive meta-learner.
\newblock In \emph{NIPS 2017 Workshop on Meta-Learning}.

\bibitem[{Munkhdalai and Yu(2017)}]{munkhdalai2017meta}
Tsendsuren Munkhdalai and Hong Yu. 2017.
\newblock Meta networks.
\newblock \emph{arXiv preprint arXiv:1703.00837}.

\bibitem[{Murugesan et~al.(2017)Murugesan, Carbonell, and
  Yang}]{murugesan2017co}
Keerthiram Murugesan, Jaime Carbonell, and Yiming Yang. 2017.
\newblock Co-clustering for multitask learning.
\newblock \emph{arXiv preprint arXiv:1703.00994}.

\bibitem[{Ng et~al.(2002)Ng, Jordan, and Weiss}]{ng2002spectral}
Andrew~Y Ng, Michael~I Jordan, and Yair Weiss. 2002.
\newblock On spectral clustering: Analysis and an algorithm.
\newblock In \emph{Advances in neural information processing systems}, pages
  849--856.

\bibitem[{Pennington et~al.(2014)Pennington, Socher, and
  Manning}]{pennington2014glove}
Jeffrey Pennington, Richard Socher, and Christopher~D Manning. 2014.
\newblock Glove: Global vectors for word representation.
\newblock In \emph{EMNLP}, volume~14, pages 1532--1543.

\bibitem[{Ravi and Larochelle(2017)}]{ravi2017optimization}
Sachin Ravi and Hugo Larochelle. 2017.
\newblock Optimization as a model for few-shot learning.
\newblock In \emph{International Conference on Learning Representations},
  volume~1, page~6.

\bibitem[{Snell et~al.(2017)Snell, Swersky, and Zemel}]{snell2017prototypical}
Jake Snell, Kevin Swersky, and Richard~S Zemel. 2017.
\newblock Prototypical networks for few-shot learning.
\newblock \emph{arXiv preprint arXiv:1703.05175}.

\bibitem[{Triantafillou et~al.(2017)Triantafillou, Zemel, and
  Urtasun}]{triantafillou2017few}
Eleni Triantafillou, Richard Zemel, and Raquel Urtasun. 2017.
\newblock Few-shot learning through an information retrieval lens.
\newblock In \emph{Advances in Neural Information Processing Systems}, pages
  2252--2262.

\bibitem[{Vinyals et~al.(2016)Vinyals, Blundell, Lillicrap, Wierstra
  et~al.}]{vinyals2016matching}
Oriol Vinyals, Charles Blundell, Tim Lillicrap, Daan Wierstra, et~al. 2016.
\newblock Matching networks for one shot learning.
\newblock In \emph{Advances in Neural Information Processing Systems}, pages
  3630--3638.

\bibitem[{Wang et~al.(2017)Wang, Ramanan, and Hebert}]{NIPS2017_7278}
Yu-Xiong Wang, Deva Ramanan, and Martial Hebert. 2017.
\newblock Learning to model the tail.
\newblock In \emph{Advances in Neural Information Processing Systems 30}, pages
  7032--7042.

\end{thebibliography}
\bibliographystyle{acl_natbib}
\clearpage

\onecolumn

\section*{Appendix A: Perfect Recovery Guarantee for the Problem (\ref{eqn:pro})}
The following theorem shows the perfect recovery guarantee for the problem (\ref{eqn:pro}). Appendix C provides the proof for completeness.

\begin{theorem}\label{thm:perfect-recovery}
Let $\X^* \in \R^{n\times n}$ be a rank $k$ matrix with a singular value decomposition  $\X^* = \U\Sigma \V^{\top}$, where $\U = (\u_1, \ldots, \u_k) \in \R^{n\times k}$ and $\V = (\v_1, \ldots, \v_k) \in \R^{n\times k}$ are the left and right singular vectors of $\X^*$, respectively. Similar to many related works of matrix completion, we assume that the following two assumptions are satisfied:
\begin{enumerate}
\item The row and column spaces of $\X$ have coherence bounded above by a positive number $\mu_0$.
\item Max absolute value in matrix\ $\U\V^{\top}$ is bounded above by $\mu_1\sqrt{r}/n$ for a positive number $\mu_1$.
\end{enumerate}
Suppose that $m_1$ entries of $\X^*$ are observed with their locations sampled uniformly at random, and among the $m_1$ observed entries, $m_2$ randomly sampled entries are corrupted. Using the resulting partially observed matrix as the input to the problem (\ref{eqn:pro}), then with a probability at least $1 - n^{-3}$, the underlying matrix $\X^*$ can be perfectly recovered, given 
\begin{enumerate}
\item $\mu(\E)\xi(\X) \leq \frac{1}{4k + 5}$,
\item $\frac{\xi(\X) - (2k -1)\mu(\E)\xi^2(\X)}{1 - 2(k+1)\mu(\E)\xi(\X)} < \lambda < \frac{1 - (4k+5)\mu(\E)\xi(\X)}{(k+2)\mu(\E)}$,
\item $\ m_1 - m_2 \geq C[\max(\mu_0, \mu_1)]^4n\log^2 n$,
\end{enumerate}
where $C$ is a positive constant; $\xi(\circ)$ and $\mu(\circ)$ denotes the low-rank and sparsity incoherence~\citep{chandrasekaran2011rank}.
\end{theorem}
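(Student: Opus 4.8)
The plan is to view (\ref{eqn:pro}) as a robust matrix completion problem and to prove that the ground-truth pair is its unique optimum by exhibiting an appropriate dual certificate. Let $\Gamma$ denote the set of $m_2$ corrupted locations and let $\E^*$ be the sparse matrix supported on $\Gamma$ that records the corruptions, so that the input satisfies $\P_\Omega(\Y) = \P_\Omega(\X^* + \E^*)$, where $\Omega$ is the set of $m_1$ observed locations. It then suffices to show that $(\X^*, \E^*)$ is the unique minimizer of (\ref{eqn:pro}). First I would write down the subgradient optimality conditions: $(\X^*,\E^*)$ is optimal precisely when there is a single matrix $W$, supported on $\Omega$, that simultaneously lies in the subdifferential of the nuclear norm at $\X^*$ and in the subdifferential of $\lambda\|\cdot\|_1$ at $\E^*$. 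Writing $T$ for the tangent space to the rank-$k$ manifold at $\X^* = \U\Sigma\V^{\top}$, this amounts to finding $W$ with $\P_{\Omega^\perp}(W)=0$, $\P_T(W)=\U\V^{\top}$, $\|\P_{T^\perp}(W)\|_{\mathrm{op}}<1$, $\P_\Gamma(W)=\lambda\,\sgn(\E^*)$, and $\|\P_{\Gamma^\perp}(W)\|_\infty<\lambda$.

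Second, I would use the rank--sparsity incoherence framework of \citep{chandrasekaran2011rank} to guarantee that the low-rank and sparse components are separately identifiable. The quantities $\xi(\X)$ and $\mu(\E)$ measure how much the tangent space $T$ is aligned with sparse directions and how much the support $\Gamma$ is aligned with low-rank directions; the hypothesis $\mu(\E)\xi(\X)\le \tfrac{1}{4k+5}$ is exactly what makes $T$ and the space of matrices supported on $\Gamma$ intersect only trivially, so that the two subgradient constraints above are mutually consistent. The admissible window for $\lambda$ in condition~2 is then the range of multipliers for which a decomposition satisfying both the operator-norm and the $\ell_\infty$ bounds can coexist; I would derive it by balancing the cross terms $\P_T\P_\Gamma$ and $\P_\Gamma\P_T$, whose norms are controlled by $\mu(\E)\xi(\X)$.

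Third, to handle the fact that $W$ must be supported on the random observation set $\Omega$ (equivalently, that the uncorrupted observations number $m_1-m_2$), I would invoke the matrix-completion machinery of \citep{candes2010power}: build $W$ by the golfing scheme, i.e. as a sum of increments each formed from an independent batch of sampled entries, iteratively driving $\P_T(W)$ toward $\U\V^{\top}$ while keeping the residual small in operator norm. The coherence assumptions~1 and~2 on $\U\V^{\top}$ bound the relevant incoherence constants, and a Bernstein/operator-Chernoff concentration argument shows that, with $m_1-m_2 \ge C[\max(\mu_0,\mu_1)]^4 n\log^2 n$ sampled correct entries, the residual contracts geometrically and all the norm bounds hold; the union bound over the $O(\log n)$ golfing iterations yields the stated success probability $1-n^{-3}$.

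The hard part is combining the two certificate requirements into one matrix $W$. Each golfing increment must respect the sparse-subgradient constraints on $\Gamma$ while the total must still have the correct nuclear-norm subgradient on $T$ and vanish off $\Omega$; controlling the interaction between the $\ell_1$ geometry on $\Gamma$ and the nuclear-norm geometry on $T$ --- precisely the cross terms governed by $\mu(\E)\xi(\X)$ --- throughout every iteration, and showing that the perturbations introduced to enforce $\P_\Gamma(W)=\lambda\,\sgn(\E^*)$ do not destroy the operator-norm bound on $T^\perp$, is the delicate step. This is where the constant $C$, the $\log^2 n$ factor, and the joint constraints on $\lambda$ and on $\mu(\E)\xi(\X)$ all originate.
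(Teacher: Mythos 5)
Your skeleton matches the paper's proof: the paper also reduces the claim to showing that the ground-truth pair $(\X,\E)$ is the \emph{unique} minimizer of (\ref{eqn:pro}), also does so via a dual certificate satisfying exactly the five conditions you list (its intermediate theorem states them as (a)--(e): $\Q$ supported on $\Omega$, $\P_T(\Q)=\U\V^{\top}$, $\|\P_{T^{\perp}}(\Q)\|<1$, $\P_{\Delta}(\Q)=\lambda\,\sgn(\E)$, $\|\P_{\Delta^c}(\Q)\|_{\infty}<\lambda$, where its $\Delta$ is your $\Gamma$), also uses the rank--sparsity incoherence quantities $\xi(\X)$ and $\mu(\E)$ of Chandrasekaran et al.\ to generate the $\lambda$ window and the $1/(4k+5)$ condition, and also draws its probabilistic ingredients from Cand\`es--Tao. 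Where you diverge is the construction of the certificate, and that is where your sketch has a genuine gap.

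You propose to build the certificate by the golfing scheme, and this conflicts with the conditions you yourself wrote down: golfing drives $\P_T(W)$ toward $\U\V^{\top}$ geometrically but never achieves equality, so it can only produce an \emph{inexact} certificate; using it requires first proving a strengthened optimality/uniqueness lemma that tolerates approximate certificates (as in Gross or Cand\`es--Li--Ma--Wright), which you never state. Moreover, you flag the enforcement of the exact sparse constraint $\P_{\Gamma}(W)=\lambda\,\sgn(\E^*)$ across golfing iterations as ``the delicate step'' but offer no mechanism for it --- your proof stops precisely where the difficulty starts. The paper avoids both issues with a closed-form, fixed-point construction in the style of Chandrasekaran et al.: it sets $\Q=\lambda\,\sgn(\E)+\epsilon_{\Delta}+\H+\F$, where $\H$ and $\F$ are obtained by applying $\P_{\Omega}\P_T(\P_T\P_{\Omega}\P_T)^{-1}$ to $\U\V^{\top}$ and to a correction term $\epsilon_T\in T$, and $\epsilon_{\Delta}$ is supported on $\Delta$. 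The coupled equations forced by conditions (b) and (d) are solvable because $\P_T\P_{\Omega\setminus\Delta}\P_T$ is invertible on $T$ once the number of \emph{correct} observed entries exceeds $C_R^2\mu_0 k n\beta\log n$ (Cand\`es--Tao Theorem 3.2, plus their Corollary 3.5 for the operator-norm control); explicit bounds on $\|\epsilon_T\|$ and $\|\epsilon_{\Delta}\|_{\infty}$ in terms of $\xi(\X)\mu(\E)$ then verify the strict inequalities (c) and (e), which is exactly where the stated $\lambda$ window and the bound $\mu(\E)\xi(\X)\le 1/(4k+5)$ come from. The paper's uniqueness argument additionally needs an injectivity result (its Proposition 1: any $\Z\in T$ vanishing on sufficiently many uniformly sampled entries is zero), a step absent from your sketch. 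If you want to salvage the golfing route, follow Cand\`es--Li--Ma--Wright: split the certificate into a low-rank part built by golfing and a sparse part built by a Neumann-series argument, under an inexact-certificate lemma; as written, your construction step does not go through.
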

Theorem~\ref{thm:perfect-recovery} implies that even if some of the observed entries computed by (\ref{eqn:A}) are incorrect, problem (\ref{eqn:pro}) can still perfectly recover the underlying similarity matrix $\X^*$ if the number of observed correct entries is at least $O(n \log^2 n)$. 
For MATL with large $n$, this implies that only a tiny fraction of all task pairs is needed to reliably infer similarities over all task pairs.
Moreover, the completed similarity matrix $\X$ is symmetric, due to symmetry of the  input matrix $\Y$.
This enables analysis by similarity-based clustering algorithms, such as spectral clustering.

\section*{Appendix B: Proof of Low-rankness of Matrix $\X$}
We first prove that the full similarity matrix $\X \in \mathbb{R}^{n\times n}$ is of low-rank. To see this, let  $\A = (\a_1, \ldots, \a_k)$ be the underlying perfect clustering result, where $k$ is the number of clusters and $\a_i \in \{0, 1\}^n$ is the membership vector for the $i$-th cluster. Given $\A$, the similarity matrix $\X$ is computed as
\[
    \X = \sum_{i=1}^k \a_i \a_i^{\top} = \sum_{i=1}^k \B_i
\]
where $\B_i = \a_i \a_i^{\top}$ is a rank one matrix. Using the fact that $\mbox{rank}(\X) \leq \sum_{i=1}^k \mbox{rank}(\B_i)$ and $\mbox{rank}(\B_i) =1$, we have $\mbox{rank}(\X) \leq k$, i.e., the rank of the similarity matrix $\X$ is upper bounded by the number of clusters. Since the number of clusters is usually small,
the similarity matrix $\X$ should be of low rank. 

\section*{Appendix C: Proof of Theorem \ref{thm:perfect-recovery}}
We then prove our main theorem. First, we define several notations that are used throughout the proof. Let $\X = \U\Sigma \V^{\top}$ be the singular value decomposition of matrix $\X$, where $\U = (\u_1, \ldots, \u_k) \in \R^{n\times k}$ and $\V = (\v_1, \ldots, \v_k) \in \R^{n\times k}$ are the left and right singular vectors of matrix $\X$, respectively. Similar to many related works of matrix completion, we assume that the following two assumptions are satisfied:
\begin{enumerate}
\item {\bf A1}: the row and column spaces of $\X$ have coherence bounded above by a positive number $\mu_0$, i.e., $\sqrt{n/r} \max_{i}\|\P_{\U}(\e_i)\| \leq \mu_0$ and $\sqrt{n/r} \max_{i}\|\P_{\V}(\e_i)\| \leq \mu_0$, where $\P_{\U} = \U\U^{\top}$, $\P_{\V} = \V\V^{\top}$, and $\e_i$ is the standard basis vector, and
\item {\bf A2}: the matrix $\U\V^{\top}$ has a maximum entry bounded by $\mu_1\sqrt{r}/n$ in absolute value for a positive number $\mu_1$.
\end{enumerate}
Let $T$ be the space spanned by the elements of the form $\u_i\y^{\top}$ and $\x\v^{\top}_i$, for $1 \leq i \leq k$, where $\x$ and $\y$ are arbitrary $n$-dimensional vectors. Let $T^{\perp}$ be the orthogonal complement to the space $T$, and let $\P_T$ be the orthogonal projection onto the subspace $T$ given by
\[
    \P_T(\Z) = \P_{\U}\Z + \Z\P_{\V} - \P_{\U}\Z\P_{\V}.
\]

The following proposition shows that for any matrix $\Z \in T$, it is a zero matrix if enough amount of its entries are zero.
\begin{prop}
Let $\Omega$ be a set of $m$ entries sampled uniformly at random from $[1,\ldots, n]\times[1,\ldots, n]$, and $\P_{\Omega}(\Z)$ projects matrix $\Z$ onto the subset $\Omega$. If $m > m_0$, where $m_0 = C_R^2\mu_0rn\beta\log n$ with $\beta > 1$ and $C_R$ being a positive constant, then for any $\Z \in T$ with $\P_{\Omega}(\Z) = 0$, we have $\Z = 0$ with probability $1 - 3n^{-\beta}$.
\end{prop}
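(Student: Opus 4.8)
The plan is to prove the statement by showing that, on an event of probability at least $1-3n^{-\beta}$, the restriction of the operator $\P_T\P_\Omega\P_T$ to the subspace $T$ is strictly positive definite; any $\Z\in T$ annihilated by $\P_\Omega$ must then vanish. The starting point is an elementary identity, valid for $\Z\in T$ (so that $\P_T\Z=\Z$) and using that $\P_\Omega$ is a self-adjoint idempotent:
\[
\langle \Z,\ \P_T\P_\Omega\P_T\,\Z\rangle = \langle \P_T\Z,\ \P_\Omega\P_T\Z\rangle = \|\P_\Omega(\Z)\|_F^2.
\]
Hence, if $\P_\Omega(\Z)=0$ and if the smallest eigenvalue of $\P_T\P_\Omega\P_T$ on $T$ is bounded away from $0$, then $\|\Z\|_F=0$. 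So the whole argument reduces to a spectral lower bound for $\P_T\P_\Omega\P_T$ on $T$.

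First I would express $\P_T\P_\Omega\P_T$ as a sum of independent rank-one operators. Writing $\P_\Omega(\cdot)=\sum_{(i,j)\in\Omega}\langle\cdot,\e_i\e_j^{\top}\rangle\,\e_i\e_j^{\top}$ and using self-adjointness of $\P_T$ gives $\P_T\P_\Omega\P_T=\sum_{(i,j)\in\Omega}\langle\cdot,\ \P_T(\e_i\e_j^{\top})\rangle\,\P_T(\e_i\e_j^{\top})$, a sum of positive semidefinite rank-one terms. Modeling the sample (with the standard reduction from sampling-without-replacement to the i.i.d./Bernoulli model) as $m$ uniform draws of $(i,j)$, each summand has expectation $\tfrac{1}{n^2}\P_T$, since $\sum_{i,j}\langle\cdot,\P_T(\e_i\e_j^{\top})\rangle\,\P_T(\e_i\e_j^{\top})$ acts as the identity on $T$; thus $\mathbb{E}[\P_T\P_\Omega\P_T]=\tfrac{m}{n^2}\P_T$. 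The coherence hypothesis {\bf A1} controls the per-term size: from $\|\P_{\U}\e_i\|^2\le\mu_0 r/n$ and $\|\P_{\V}\e_j\|^2\le\mu_0 r/n$ one obtains the uniform bound $\|\P_T(\e_i\e_j^{\top})\|_F^2\le 2\mu_0 r/n$. With the mean and this bound in hand I would apply a non-commutative (operator) Bernstein / matrix Chernoff inequality, in the spirit of \citep{candes2010power}, to the centered sum to conclude
\[
\Pr\!\left[\ \left\|\tfrac{n^2}{m}\,\P_T\P_\Omega\P_T-\P_T\right\|_{\mathrm{op}}>\tfrac12\ \right]\le 3n^{-\beta},
\]
provided $m\ge C_R^2\,\mu_0 r n\,\beta\log n=m_0$. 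This is precisely where the threshold $m_0$, the constant $C_R$, and the failure probability $3n^{-\beta}$ all enter.

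On the complementary event, every eigenvalue of $\P_T\P_\Omega\P_T$ on $T$ lies in $[\tfrac{m}{2n^2},\tfrac{3m}{2n^2}]$, so $\langle\Z,\P_T\P_\Omega\P_T\,\Z\rangle\ge\tfrac{m}{2n^2}\|\Z\|_F^2$ for all $\Z\in T$. Combined with the identity of the first paragraph, $\P_\Omega(\Z)=0$ forces $\|\Z\|_F=0$, i.e. $\Z=0$, which is the claim. The main obstacle, and the technical heart of the proof, is the operator concentration step: verifying the hypotheses of the matrix Bernstein/Chernoff bound (controlling both the variance proxy and the almost-sure bound on each term through the coherence estimate {\bf A1}), and correctly handling the passage between sampling with and without replacement so that the uniform-sampling model of the statement is covered. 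The remaining steps are routine linear algebra once this bound is established.
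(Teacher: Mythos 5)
Your proposal is correct and takes essentially the same route as the paper: both arguments reduce the claim to the fact that the scaled sampling operator $\tfrac{n^2}{m}\P_T\P_{\Omega}\P_T$ concentrates around the identity on $T$ once $m > m_0$, hence is injective on $T$, which forces $\Z = 0$ whenever $\Z \in T$ and $\P_{\Omega}(\Z) = 0$. The only difference is self-containedness: the paper imports this concentration bound by citing Theorem 3.2 of \cite{candes2010power}, whereas you sketch its derivation via the rank-one decomposition and a matrix Bernstein inequality; incidentally, your closing identity $\langle \Z, \P_T\P_{\Omega}\P_T\,\Z\rangle = \|\P_{\Omega}(\Z)\|_F^2$ is a cleaner way to finish than the paper's displayed inequality, which as printed mixes $\|\cdot\|_F$ with $\|\cdot\|_F^2$.
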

\begin{proof}
According to the Theorem 3.2 in \cite{candes2010power}, for any $\Z \in T$, with a probability at least $1 - 2n^{2 - 2\beta}$, we have
\begin{eqnarray}\label{eqn:4}
 \|\P_T(\Z)\|_F - \delta \|\Z\|_F \leq \frac{n^2}{m}\|\P_T\P_{\Omega}\P_T(\Z)\|_F^2 = 0
\end{eqnarray}
where $\delta = m_0/m < 1$. Since $\Z \in T$, we have $P_T(\Z) = \Z$. Then from (\ref{eqn:4}), we have $\|\Z\|_F \leq 0 $ and thus $\Z = 0$.
\end{proof}

In the following, we will develop a theorem for the dual certificate that guarantees the unique optimal solution to the following optimization problem
\begin{eqnarray}\label{eqn:pro_appendix}
&\min\limits_{\X,\ \E} & \|\X\|_* + \lambda \|\E\|_1\\ \label{eqn:B_app}
& \mbox{s.t.}& \P_{\Omega}(\X+\E)  =  \P_{\Omega}(\Y). \nonumber
\end{eqnarray}
\begin{thm}\label{thm:1}
Suppose we observe $m_1$ entries of $\X$ with locations sampled uniformly at random, denoted by $\Omega$. We further assume that $m_2$ entries randomly sampled from $m_1$ observed entries are corrupted, denoted by $\Delta$. Suppose that $\P_{\Omega}(\Y) = \P_{\Omega}(\X + \E)$ and the number of observed correct entries $m_1 - m_2 > m_0=C_R^2\mu_0rn\beta\log n$. Then, for any $\beta > 1$, with a probability at least $1 - 3n^{-\beta}$, the underlying true matrices $(\X, \E)$ is the unique optimizer of (\ref{eqn:pro_appendix}) if both assumptions {\bf A1} and {\bf A2} are satisfied and there exists a dual $\Q \in \R^{n\times n}$ such that (a) $\Q = \P_{\Omega}(\Q)$, (b) $\P_T(\Q) = \U\V^{\top}$, (c) $\|\P_{T^{\top}}(\Q)\| < 1$, (d) $\P_{\Delta}(\Q) = \lambda\ \sgn(\E)$, and (e) $\|\P_{\Delta^c}(\Q)\|_{\infty} < \lambda$.
\end{thm}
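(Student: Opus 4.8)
The plan is to treat this as a deterministic optimality-plus-uniqueness statement driven by convex duality, with the only probabilistic content coming from the preceding Proposition. First I would take an arbitrary competitor $(\X+\mathbf{H},\,\E+\mathbf{G})$ and note that feasibility of both the true pair and the competitor forces $\P_{\Omega}(\mathbf{H}+\mathbf{G})=0$; thus the single linear constraint is what couples the nuclear-norm perturbation $\mathbf{H}$ and the sparse perturbation $\mathbf{G}$. The goal is then to show $\|\X+\mathbf{H}\|_* + \lambda\|\E+\mathbf{G}\|_1 \geq \|\X\|_* + \lambda\|\E\|_1$, with equality forcing $\mathbf{H}=\mathbf{G}=0$.

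The key step is to lower-bound each term by a subgradient inequality while keeping the ``free'' part of the subdifferential available. For the nuclear norm I would use $\|\X+\mathbf{H}\|_* \geq \|\X\|_* + \langle \U\V^{\top},\mathbf{H}\rangle + \|\P_{T^{\perp}}(\mathbf{H})\|_*$, obtained by selecting $W\in T^{\perp}$ with $\|W\|\leq 1$ aligned with $\P_{T^{\perp}}(\mathbf{H})$; symmetrically, since $\sgn(\E)$ is supported on $\Delta$, I would use $\|\E+\mathbf{G}\|_1 \geq \|\E\|_1 + \langle \sgn(\E),\mathbf{G}\rangle + \|\P_{\Delta^c}(\mathbf{G})\|_1$. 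Summing yields a lower bound whose linear part is $\langle \U\V^{\top},\mathbf{H}\rangle + \lambda\langle \sgn(\E),\mathbf{G}\rangle$, plus the two nonnegative tail terms $\|\P_{T^{\perp}}(\mathbf{H})\|_*$ and $\lambda\|\P_{\Delta^c}(\mathbf{G})\|_1$.

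Next I would use the dual certificate $\Q$ to neutralize the linear part. By conditions (b) and (d) it equals $\langle \P_T(\Q),\mathbf{H}\rangle + \langle \P_\Delta(\Q),\mathbf{G}\rangle$; rewriting $\P_T(\mathbf{H})=\mathbf{H}-\P_{T^{\perp}}(\mathbf{H})$ and $\P_\Delta(\mathbf{G})=\mathbf{G}-\P_{\Delta^c}(\mathbf{G})$, using self-adjointness of the projectors, and invoking (a) together with $\langle \Q,\mathbf{H}+\mathbf{G}\rangle = \langle \Q,\P_{\Omega}(\mathbf{H}+\mathbf{G})\rangle = 0$ converts it into $-\langle \P_{T^{\perp}}(\Q),\P_{T^{\perp}}(\mathbf{H})\rangle - \langle \P_{\Delta^c}(\Q),\P_{\Delta^c}(\mathbf{G})\rangle$. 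Applying the duality pairings $|\langle A,B\rangle|\leq \|A\|\,\|B\|_*$ and $|\langle A,B\rangle|\leq \|A\|_\infty\|B\|_1$ with the strict bounds (c) and (e) then gives
\[
\|\X+\mathbf{H}\|_* + \lambda\|\E+\mathbf{G}\|_1 - \|\X\|_* - \lambda\|\E\|_1 \;\geq\; \left(1-\|\P_{T^{\perp}}(\Q)\|\right)\|\P_{T^{\perp}}(\mathbf{H})\|_* + \left(\lambda-\|\P_{\Delta^c}(\Q)\|_\infty\right)\|\P_{\Delta^c}(\mathbf{G})\|_1 \;\geq\; 0,
\]
which establishes optimality of $(\X,\E)$.

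For uniqueness, equality above forces $\P_{T^{\perp}}(\mathbf{H})=0$ (so $\mathbf{H}\in T$) and $\P_{\Delta^c}(\mathbf{G})=0$ (so $\mathbf{G}$ is supported on $\Delta$), since both coefficients are strictly positive. Because $\Delta\subseteq\Omega$, evaluating $\P_{\Omega}(\mathbf{H}+\mathbf{G})=0$ on the correct observed entries $\Omega\setminus\Delta$ gives $\P_{\Omega\setminus\Delta}(\mathbf{H})=0$; as $|\Omega\setminus\Delta| = m_1-m_2 > m_0$, the preceding Proposition applies to $\mathbf{H}\in T$ and forces $\mathbf{H}=0$ with probability at least $1-3n^{-\beta}$, after which $\P_{\Omega}(\mathbf{G})=0$ and the support condition give $\mathbf{G}=0$. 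The main obstacle is the bookkeeping in the cancellation step --- correctly splitting $\mathbf{H}$ and $\mathbf{G}$ into their $T/T^{\perp}$ and $\Delta/\Delta^c$ components and using self-adjointness of the projectors so that conditions (a)--(e) enter with the right signs; the genuinely probabilistic content is confined to the preceding Proposition, which is already in hand, so no new concentration argument is needed here.
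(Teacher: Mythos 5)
Your proof is correct and follows essentially the same route as the paper's: subgradient inequalities for the nuclear norm and the $\ell_1$ norm with the free subgradient components chosen to align with the perturbations, cancellation of the linear term via conditions (a)--(e) together with feasibility ($\langle \Q, \cdot\rangle$ vanishing on $\P_{\Omega}$-null perturbations), and the preceding Proposition applied to the $m_1 - m_2$ entries in $\Omega\setminus\Delta$ to force the low-rank perturbation, and then the sparse one, to vanish. The only difference is cosmetic: you spell out the optimality half explicitly (which the paper merely asserts) and fold it into the same inequality, whereas the paper phrases uniqueness as a proof by contradiction using auxiliary subgradients $\Q_{\X}, \Q_{\E}$ tied to $\Q$ through the certificate conditions.
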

\begin{proof}
First, the existence of $\Q$ satisfying the conditions (a) to (e) ensures that $(\X, \E)$ is an optimal solution. We only need to show its uniqueness and we prove it by contradiction. Assume there exists another optimal solution $(\X+\N_\X, \E+\N_\E)$, where $\P_{\Omega}(\N_\X + \N_\E) = 0$. Then we have
\begin{eqnarray*}
\|\X+\N_\X\|_* + \lambda \|\E+\N_\E\|_1 & \geq & \|\X\|_* +\lambda \|\E\|_1 +  \langle \Q_\E, \N_\E \rangle + \langle \Q_\X, \N_\X \rangle
\end{eqnarray*}
where $\Q_\E$ and $\Q_\X$ satisfying $\P_{\Delta}(\Q_\E) = \lambda\ \sgn(\E)$, $\|\P_{\Delta^c}(\Q_\E)\|_{\infty} \leq \lambda$, $\P_T(\Q_\X) = \U\V^{\top}$ and $\|\P_{T^{\perp}}(\Q_\X)\| \leq 1$. As a result, we have
\begin{eqnarray*}
& & \lambda \|\E+\N_\E\|_1 + \|\X+\N_\X\|_* \\
& \geq & \lambda \|\E\|_1 + \|\X\|_* + \langle \Q + \P_{\Delta^c}(\Q_\E) - \P_{\Delta^c}(\Q), \N_\E \rangle + \langle \Q + \P_{T^{\perp}}(\Q_\X) - \P_{T^{\perp}}(\Q), \N_\X \rangle \\
& = & \lambda \|\E\|_1 + \|\X\|_*  + \langle \Q, \N_\E + \N_\X \rangle + \langle \P_{\Delta^c}(\Q_\E) - \P_{\Delta^c}(\Q), \N_\E \rangle + \langle \P_{T^{\perp}}(\Q_\X) - \P_{T^{\perp}}(\Q), \N_\X \rangle \\
& = & \lambda \|\E\|_1 + \|\X\|_*  + \langle \P_{\Delta^c}(\Q_\E) - \P_{\Delta^c}(\Q), \P_{\Delta^c}(\N_\E) \rangle + \langle \P_{T^{\perp}}(\Q_\X) - \P_{T^{\perp}}(\Q), \P_{T^{\perp}}(\N_\X) \rangle
\end{eqnarray*}

We then choose $\P_{\Delta^c}(\Q_\E)$ and $\P_{T^{\perp}}(\Q_\X)$ to be such that $\langle \P_{\Delta^c}(\Q_\E), \P_{\Delta^c}(\N_\E) \rangle = \lambda \|\P_{\Delta^c}(\N_\E)\|_1$ and $\langle \P_{T^{\perp}}(\Q_\X), \P_{T^{\perp}}(\N_\X) \rangle = \|\P_{T^{\perp}}(\N_\X)\|_{*}$. We thus have
\begin{eqnarray*}
& & \lambda \|\E+\N_\E\|_1 + \|\X+\N_\X\|_*\\
& \geq &\lambda \|\E\|_1 + \|\X\|_*  + (\lambda - \|\P_{\Delta^c}(\Q)\|_{\infty}) \|\P_{\Delta^c}(\N_\E)\|_1 + (1 - \|\P_{T^{\perp}}(\Q)\|)\|\P_{T^{\perp}}(\N_\X)\|_{*}
\end{eqnarray*}
Since $(\X+\N_\X, \E+\N_\E)$ is also an optimal solution, we have $\|\P_{\Omega^c}(\N_E)\|_1 = \|\P_{T^{\perp}}(\N_\X)\|_{*}$, leading to $\P_{\Omega^c}(\N_\E) = \P_{T^{\perp}}(\N_\X) = 0$, or $\N_\X \in T$. Since $\P_{\Omega}(\N_\X + \N_\E) = 0$, we have $\N_\X = \N_\E + \Z$, where $P_{\Omega}(\Z) = 0$ and $\P_{\Omega^c}(\N_\E) = 0$. Hence, $\P_{\Omega^c \cap \Omega}(\N_\X) = 0$, where $|\Omega^c \cap \Omega| = m_1 - m_2$. Since $m_1 - m_2 > m_0$, according to Proposition 1, we have, with a probability $1 - 3n^{-\beta}$, $\N_\X = 0$. Besides, since $\P_{\Omega}(\N_\X+\N_\E) = \P_{\Omega}(\N_\E) = 0$ and $\Delta \subset \Omega$, we have $\P_{\Delta}(\N_\E) = 0$. Since $\N_\E = \P_{\Delta}(\N_\E) + \P_{\Delta^c}(\N_\E)$, we have $\N_\E = 0$, which leads to the contradiction.
\end{proof}

Given Theorem~\ref{thm:1}, we are now ready to prove Theorem 3.1.
\begin{proof}
The key to the proof is to construct the matrix $\Q$ that satisfies the conditions (a)-(e) specified in Theorem~\ref{thm:1}. First, according to Theorem~\ref{thm:1}, when $m_1 - m_2 > m_0=C_R^2\mu_0rn\beta\log n$, with a probability at least $1 - 3n^{-\beta}$, mapping $\P_T\P_{\Omega}\P_T(\Z): T \mapsto T$ is an one to one mapping and therefore its inverse mapping, denoted by $(\P_T\P_{\Omega}\P_T)^{-1}$ is well defined. Similar to the proof of Theorem 2 in~\cite{chandrasekaran2011rank}, we construct the dual certificate $\Q$ as follows
\[
    \Q = \lambda\ \sgn(\E) + \epsilon_{\Delta} + \P_{\Delta}\P_T(\P_T\P_{\Omega}\P_T)^{-1}(\U\V^{\top} + \epsilon_T)
\]
where $\epsilon_T \in T$ and $\epsilon_{\Delta} = \P_{\Delta}(\epsilon_{\Delta})$. We further define
\begin{eqnarray*}
\H & = & \P_{\Omega}\P_T(\P_T\P_{\Omega}\P_T)^{-1}(\U\V^{\top}) \\
\F & = & \P_{\Omega}\P_T(\P_T\P_{\Omega}\P_T)^{-1}(\epsilon_{T})
\end{eqnarray*}
Evidently, we have $\P_{\Omega}(\Q) = \Q$ since $\Delta \subset \Omega$, and therefore the condition (a) is satisfied. To satisfy the conditions (b)-(e), we need
\begin{eqnarray}
\P_T(\Q) = \U\V^{\top} & \rightarrow & \epsilon_T = -\P_T(\lambda\ \sgn(\E) + \epsilon_{\Delta}) \label{eqn:c1}\\
\|\P_{T^{\perp}}(\Q)\| < 1 & \rightarrow & \mu(\E)\left(\lambda + \|\epsilon_{\Delta}\|_{\infty}\right) + \|\P_{T^{\perp}}(\H)\| + \|\P_{T^{\perp}}(\F)\| < 1\label{eqn:c2} \\
\P_{\Delta}(\Q) = \lambda\ \sgn(\E) & \rightarrow & \epsilon_{\Delta} = - \P_{\Delta}(\H + \F) \label{eqn:c3} \\
|\P_{\Delta^c}(\Q)|_{\infty} < \lambda & \rightarrow & \xi(\X)(1 + \|\epsilon_{T}\|) < \lambda \label{eqn:c4}
\end{eqnarray}
Below, we will first show that there exist solutions $\epsilon_T \in T$ and $\epsilon_{\Delta}$ that satisfy conditions (\ref{eqn:c1}) and (\ref{eqn:c3}). We will then bound $\|\epsilon_{\Omega}\|_{\infty}$, $\|\epsilon_T\|$, $\|\P_{T^{\perp}}(\H)\|$, and $\|\P_{T^{\perp}}(\F)\|$ to show that with sufficiently small $\mu(\E)$ and $\xi(\X)$, and appropriately chosen $\lambda$, conditions (\ref{eqn:c2}) and (\ref{eqn:c4}) can be satisfied as well.

First, we show the existence of $\epsilon_{\Delta}$ and $\epsilon_T$ that obey the relationships in (\ref{eqn:c1}) and (\ref{eqn:c3}). It is equivalent to show that there exists $\epsilon_T$ that satisfies the following relation
\[
\epsilon_T = -\P_T(\lambda\ \sgn(\E)) + \P_T\P_{\Delta}(\H) + \P_T\P_{\Delta}\P_T(\P_T\P_{\Omega}\P_T)^{-1}(\epsilon_T)
\]
or
\[
\P_T\P_{\Omega\setminus\Delta}\P_T(\P_T\P_{\Omega}\P_T)^{-1}(\epsilon_T) = -\P_T(\lambda\ \sgn(\E)) + \P_T\P_{\Delta}(\H),
\]
where $\Omega\setminus\Delta$ indicates the complement set of set $\Delta$ in $\Omega$ and $|\Omega\setminus\Delta|$ denotes its cardinality. Similar to the previous argument, when $|\Omega\setminus\Delta| = m_1 - m_2 > m_0$, with a probability $1 - 3n^{-\beta}$, $\P_T\P_{\Omega\setminus\Delta}\P_T(\Z): T \mapsto T$ is an one to one mapping, and therefore $(\P_T\P_{\Omega\setminus\Delta}\P_T(\Z))^{-1}$ is well defined. Using this result, we have the following solution to the above equation
\begin{eqnarray*}
\epsilon_T & = & \P_T\P_{\Omega}\P_T(\P_T\P_{\Omega\setminus\Delta}\P_T)^{-1}\left(-\P_T(\lambda\ \sgn(\E)) + \P_T\P_{\Delta}(\H) \right)
\end{eqnarray*}

We now bound $\|\epsilon_T\|$ and $\|\epsilon_{\Delta}\|_{\infty}$. Since $\|\epsilon_T\| \leq \|\epsilon_T\|_F$, we bound $\|\epsilon_T\|_F$ instead. First, according to Corollary 3.5 in~\cite{candes2010power}, when $\beta = 4$, with a probability $1 - n^{-3}$, for any $\Z \in T$, we have
\[
    \left\|\P_{T^{\perp}}\P_{\Omega}\P_T(\P_T\P_{\Omega}\P_T)^{-1}(\Z)\right\|_F \leq \|\Z\|_F.
\]
Using this result, we have
\begin{eqnarray*}
    \|\epsilon_{\Delta}\|_{\infty} & \leq & \xi(\X)\left(\|\H\| + \|\F\|\right) \\
    & \leq & \xi(\X)\left(1 + \|\P_{T^{\perp}}(\H)\|_F + \|\epsilon_T\| + \|\P_{T^{\perp}}(\F)\|_F\right) \\
    & \leq & \xi(\X)\left(2 + \|\epsilon_T\| + \|\epsilon_T\|_F\right) \\
    & \leq & \xi(\X)\left[2 + (2k+1)\|\epsilon_T\|\right]
\end{eqnarray*}
In the last step, we use the fact that $\rank(\epsilon_T) \leq 2k$ if $\epsilon_T \in T$. We then proceed to bound $\|\epsilon_T\|$ as follows
\begin{eqnarray*}
\|\epsilon_T\| & \leq & \mu(\E)\left(\lambda + \|\epsilon_{\Delta}\|_{\infty}\right)
\end{eqnarray*}
Combining the above two inequalities together, we have
\begin{eqnarray*}
\|\epsilon_T\| & \leq & \xi(\X)\mu(\E)(2k + 1)\|\epsilon_T\| + 2\xi(\X)\mu(\E) + \lambda \mu(\E) \\
\|\epsilon_{\Delta}\|_{\infty} & \leq & \xi(\X)\left[2 + (2k+1)\mu(\E)(\lambda + \|\epsilon_{\Delta}\|_{\infty}\right),
\end{eqnarray*}
which lead to
\begin{eqnarray*}
\|\epsilon_T\| & \leq & \frac{\lambda \mu(\E) + 2\xi(\X)\mu(\E)}{1 - (2k+1)\xi(\X)\mu(\E)} \\
\|\epsilon_{\Delta}\|_{\infty} & \leq & \frac{2\xi(\X) + (2k+1)\lambda\xi(\X)\mu(\E)}{1 - (2k+1)\xi(\X)\mu(\E)}
\end{eqnarray*}
Using the bound for $\|\epsilon_{\Delta}\|_{\infty}$ and $\|\epsilon_T\|$, we now check the condition (\ref{eqn:c2})
\begin{eqnarray*}
1 & > & \mu(\E)\left(\lambda + |\epsilon_{\Delta}|_{\infty}\right) + \frac{1}{2} + \frac{k}{2}\|\epsilon_T\|
\end{eqnarray*}
or
\[
\lambda < \frac{1 - \xi(\X)\mu(\E)(4k + 5)}{\mu(\E)(k+2)}
\]
For the condition (\ref{eqn:c4}), we have
\[
    \lambda > \xi(\X) + \xi(\X)\|\epsilon_T\|
\]
or
\[
    \lambda > \frac{\xi(\X) - (2k - 1)\xi^2(\X)\mu(\E)}{1 - 2(k+1)\xi(\X)\mu(\E)}
\]
To ensure that there exists $\lambda \geq 0$ satisfies the above two conditions, we have
\[
1 - 5(k+1)\xi(\X)\mu(\E) + (10k^2+21k+8)[\xi(X)\mu(\E)]^2 > 0
\]
and
\[
1 - \xi(\X)\mu(\E)(4k + 5) \geq 0
\]
Since the first condition is guaranteed to be satisfied for $k \geq 1$, we have
\[
    \xi(\X)\mu(\E) \leq \frac{1}{4k + 5}.
\]
Thus we finish the proof.
\end{proof}

\section*{Appendix D: Data Statistics}
We listed the detailed domains of the sentiment analysis tasks in Table \ref{tab:sa_data}. We removed the \emph{musical\_instruments} and \emph{tools\_hardware} domains from the original data because they have too few labeled examples. The statistics for the 10 target tasks of intent classification in Table \ref{tab:nlu_data}.

\begin{table*}[th]
\centering
\vspace{0.1 in}
\begin{tabular}{|c|c|c|c|}
\hline
{\bf Domains} & {\bf \#train} &{\bf \#validation}  & {\bf \#test}  \\ \hline
apparel & 7398 & 926 & 928 \\
automotive & 601 & 69 & 66 \\
baby & 3405 & 437 & 414 \\
beauty & 2305 & 280 & 299 \\
books & 19913 & 2436 & 2489 \\
camera\_photo & 5915 & 744 & 749 \\
cell\_phones\_service & 816 & 109 & 98 \\
computer\_video\_games & 2201 & 274 & 296 \\
dvd & 19961 & 2624 & 2412 \\
electronics & 18431 & 2304 & 2274 \\
gourmet\_food & 1227 & 182 & 166 \\
grocery & 2101 & 268 & 263 \\
health\_personal\_care & 5826 & 687 & 712 \\
jewelry\_watches & 1597 & 188 & 196 \\
kitchen\_housewares & 15888 & 1978 & 1990 \\
magazines & 3341 & 427 & 421 \\
music & 20103 & 2463 & 2510 \\
office\_products & 337 & 54 & 40 \\
outdoor\_living & 1321 & 143 & 135 \\
software & 1934 & 254 & 202 \\
sports\_outdoors & 4582 & 566 & 580 \\
toys\_games & 10634 & 1267 & 1246 \\
video & 19941 & 2519 & 2539 \\
\hline
\end{tabular}
\caption{Statistics of the Multi-Domain Sentiment Classification Data.}\label{tab:sa_data}
\end{table*}

\begin{table*}[th]
\centering
\vspace{0.1 in}
\begin{tabular}{|c|c|c|}
\hline
{\bf Dataset ID} & {\bf \#labeled instances} &{\bf \#labels}   \\ \hline
1	&	497	&	11	\\
2	&	3071	&	14	\\
3	&	305	&	21	\\
4	&	122	&	7	\\
5	&	110	&	11	\\
6	&	126	&	12	\\
7	&	218	&	45	\\
8	&	297	&	10	\\
9	&	424	&	4	\\
10	&	110	&	17	\\
\hline
\end{tabular}
\caption{Statistics of the User Intent Classification Data.}\label{tab:nlu_data}
\end{table*}

\end{document}